\documentclass[11pt]{article}

\usepackage{authblk}
\usepackage{booktabs,longtable,xcolor}
\definecolor{darkblue}{RGB}{0,70,140}

\usepackage[T1]{fontenc}
\usepackage[utf8]{inputenc}
\usepackage{lmodern}
\usepackage{microtype}
\usepackage[margin=1in]{geometry}
\usepackage{authblk}
\newcommand{\jm}{j_{\min}}
\newcommand{\jM}{j_{\max}}

\newcommand{\equalcontrib}{\textsuperscript{\(\dagger\)}}
\newcommand{\corresponding}{\textsuperscript{*}}

\usepackage{amsmath}
\usepackage{amssymb}
\usepackage{amsthm}
\usepackage{mathtools}

\usepackage{graphicx}
\usepackage{booktabs}
\usepackage{multirow}
\usepackage{threeparttable}
\usepackage{caption}
\usepackage{subcaption}

\usepackage[table]{xcolor}
\usepackage[most]{tcolorbox}

\definecolor{darkblue}{RGB}{0,70,140}
\definecolor{darkred}{RGB}{150,0,0}
\definecolor{darkgray}{RGB}{90,90,90}

\definecolor{theorembackground}{RGB}{239,248,255}
\definecolor{theoremborder}{RGB}{30,110,180}
\newtcolorbox{statementbox}{
    enhanced,
    breakable,
    colback=theorembackground,
    colframe=theoremborder,
    boxrule=0.7pt,
    arc=2pt,
    left=6pt,
    right=6pt,
    top=5pt,
    bottom=5pt,
    before skip=8pt,
    after skip=8pt
}

\definecolor{proofbackground}{RGB}{255,250,240}
\definecolor{proofborder}{RGB}{170,145,105}

\newtcolorbox{proofbox}{
    enhanced,
    breakable,
    colback=proofbackground,
    colframe=proofborder,
    boxrule=0.6pt,
    arc=2pt,
    left=6pt,
    right=6pt,
    top=5pt,
    bottom=5pt,
    before skip=8pt,
    after skip=8pt
}

\usepackage[most]{tcolorbox}

\definecolor{modelbackground}{RGB}{242,250,248}
\definecolor{modelborder}{RGB}{35,125,115}

\newtcolorbox{modelbox}[1]{
    enhanced,
    breakable,
    colback=modelbackground,
    colframe=modelborder,
    colbacktitle=modelborder,
    coltitle=white,
    fonttitle=\bfseries,
    title={#1},
    boxrule=0.7pt,
    arc=2pt,
    left=7pt,
    right=7pt,
    top=6pt,
    bottom=6pt,
    before skip=10pt,
    after skip=10pt
}

\theoremstyle{plain}
\newtheorem{theorem}{Theorem}[section]
\newtheorem{proposition}[theorem]{Proposition}
\newtheorem{lemma}[theorem]{Lemma}
\newtheorem{corollary}[theorem]{Corollary}

\theoremstyle{definition}

\theoremstyle{remark}

\newenvironment{boxedtheorem}
{\begin{statementbox}\begin{theorem}}
{\end{theorem}\end{statementbox}}

\newenvironment{boxedcorollary}
{\begin{statementbox}\begin{corollary}}
{\end{corollary}\end{statementbox}}

\newenvironment{boxedproposition}
{\begin{statementbox}\begin{proposition}}
{\end{proposition}\end{statementbox}}

\usepackage[round,authoryear]{natbib}

\usepackage[
    colorlinks=true,
    linkcolor=darkblue,
    citecolor=darkred,
    urlcolor=darkblue
]{hyperref}

\usepackage[
    capitalise,
    nameinlink,
    noabbrev
]{cleveref}

\usepackage{tikz}
\usetikzlibrary{
    arrows.meta,
    positioning,
    fit,
    backgrounds,
    calc,
    shapes.geometric
}

\definecolor{annotatorblue}{RGB}{52,101,164}
\definecolor{itemorange}{RGB}{230,126,34}
\definecolor{branchgreen}{RGB}{39,139,94}
\definecolor{posteriorpurple}{RGB}{106,61,154}
\definecolor{lightgray}{RGB}{245,245,245}
\definecolor{darkgray}{RGB}{75,75,75}
\usepackage{xspace}
\usepackage{pifont}

\newcommand{\best}[1]{\ensuremath{\boldsymbol{#1}}}
\newcommand{\second}[1]{\ensuremath{\underline{#1}}}

\newcommand{\indic}[1]{\mathbf 1\!\left\{#1\right\}}

\DeclareRobustCommand{\Method}{%
    \ifmmode
        \text{\normalfont CC-Rasch}%
    \else
        CC-Rasch%
    \fi
    \xspace
}

\title{
A Model for Imbalanced Label Aggregation:
A Focus on Minority-Class Detection
}

\author[1,2]{
Gabriel Singer\equalcontrib\corresponding
}

\author[2]{
Samuel Gruffaz\equalcontrib\corresponding
}

\author[1]{
Olivier Vo Van
}

\author[2]{
Nicolas Vayatis
}

\author[2]{
Argyris Kalogeratos
}

\affil[1]{
SNCF, Paris, France
}
\affil[2]{Tampere University, Finland, Biostatistics group}

\affil[3]{
Université Paris-Saclay,
ENS Paris-Saclay,
Centre Borelli, France
}

\date{}

\begin{document}

\maketitle

\begin{center}
\small
\(*\) Equal contribution.
\qquad
\(\dagger\) Corresponding authors.\\[2pt]
\href{mailto:gabriel.singer@sncf.fr}{gabriel.singer@sncf.fr}
\qquad
\href{mailto:samuel.gruffaz@tuni.fi}
{samuel.gruffaz@tuni.fi}
\end{center}
\maketitle

\begin{abstract}
We study imbalanced crowdsourcing with a focus on class-dependent annotator accuracy, a setting that, to the best of our knowledge, remains relatively underexplored despite its importance in real-world inspection systems where the labels of greatest operational importance are also the rarest ones.
In this setting, annotators may be reliable on both classes, unreliable on both classes, majority-class specialists, or minority-class specialists. Existing models only partially address this problem: they either capture class-dependent errors but ignore item difficulty, or they model item difficulty without capturing class-dependent errors. To fill this gap for imbalanced datasets in crowdsourcing, we introduce a generative aggregation model combining item difficulty with class-dependent annotator competence. The model allows both annotator abilities and item difficulties to vary across classes.  
We then revisit Condorcet's Jury Theorem in the class-imbalanced setting. We also show that majority voting asymptotically preserves the underlying class proportion. 
We evaluate our model on $33$ real-world crowdsourcing datasets, covering multiclass tasks such as images and text, as well as two large-scale regimes: large-scale annotation datasets, with many annotations per item, and
large-scale item datasets, with a large number of annotated instances. Across these diverse settings, our model consistently achieves the highest minority recall while remaining competitive in balanced accuracy, making it particularly relevant when rare-label recovery is the primary objective.
\end{abstract}

\section{Introduction}
\label{sec:introduction}

In many real-world inspection systems, the labels of  interest are also the rarest ones. 
This creates a tension: large annotated datasets are needed to train reliable models, but expert annotation of rare events is costly, and difficult to scale. 
Crowdsourcing provides an attractive alternative by collecting multiple labels from non-expert annotators at lower cost and larger scale \citep{snow2008cheap}.  In a balanced setting, majority voting and weighted majority voting enjoy consistency guarantees \citep{berend2014consistencyweightedmajorityvotes,BandParoushMajVoteCharach}. 
However, imbalanced  settings break part of this picture. Majority voting can achieve high overall accuracy, while  missing rare but critical minority-class defects.

Class imbalance also changes the structure of the crowd. In balanced binary settings, it is  natural to distinguish reliable from unreliable annotators \citep{berend2014consistencyweightedmajorityvotes, BandParoushMajVoteCharach}. In imbalanced settings, this distinction is not enough to capture all possibilities. 
In fact, an annotator may be reliable on both classes, unreliable on both classes, reliable only on the majority class, or only on the minority class.
Dawid--Skene's model captures annotator confusion matrices and can therefore represent class-dependent error patterns \citep{dawid1979maximum}. Yet, this does not explicitly model item difficulty. 
The Rasch model was the first to introduce item difficulty and annotator ability modeling \citep{rasch1960probabilistic}; GLAD later adapted this principle by adding the estimation of the latent true labels \citep{whitehill2009glad}.
However, they both consider a single ability parameter per annotator, which prevents them from distinguishing majority-class competence from minority-class competence. Thus, Dawid--Skene is class-dependent, but not difficulty-aware, while GLAD is difficulty-aware but not class-dependent. 
Moreover, most aggregation methods \cite{Ustalov_2024,crowdfm} are accuracy-driven rather than designed for minority-class recovery.

\paragraph{Contributions.} Our contributions are as follows:
\begin{enumerate}
\item \textbf{Theoretical analysis of majority vote under class imbalance.}
We provide interpretable conditions guaranteeing a Condorcet jury theorem in
the class-imbalanced setting (Theorem~\ref{Condorcet}) and characterize
how majority voting asymptotically preserves the underlying minority-class
proportion
(Corollary~\ref{mvimbalancedratio}).

\item \textbf{Mixed-effects generative model.}
We propose a new generative model of annotator competence and item difficulty that captures how both vary across classes. We complete our analysis by proving that, the M-step of the EM  algorithm admits a unique solution (Proposition~\ref{prop:unique-m-step}).

\item \textbf{Large-scale empirical evaluation.}
We evaluate \Method against four baselines on \(33\) diverse
real-world crowdsourcing datasets, constituting, to the best of our knowledge,
one of the largest benchmarks.
\Method achieves consistent gains in minority recall across evaluation regimes,
while controlled synthetic experiments confirm its robustness as
minority-class specialists become increasingly rare.
\end{enumerate}

In what follows: Section~\ref{sec:background} reviews the Rasch and GLAD models. Section~\ref{theoreticalMV} analyzes majority voting under class imbalance. Section~\ref{sec:theModel} introduces \Method, and Section~\ref{sec:exps} evaluates it on synthetic and real-world datasets. Section~\ref{sec:conclusions} concludes the paper. Complete proofs are provided in the supplementary material.

\section{Background}
\label{sec:background}
\paragraph{The crowdsourcing problem}
We consider that each item has an unknown ground-truth label \(Y\in[K]\), where
\([K]:=\{0,\ldots,K-1\}\) and \(K\) denotes the number of classes. As a reference example, one may think an item to be an image and of its ground-truth label as indicating whether an object appears in it; in this case $K=2$. Let \(\mathcal R_R=\{1,\dots,R\}\) be the set of \(R\) annotators. Each item receives at most \(R\) noisy labels \(Y_1,\ldots,Y_R\in[K]\).
A label aggregation rule combines these noisy labels into a single prediction
\(\widehat Y\in[K]\). The simplest aggregation rule is majority
voting, which outputs the class with most votes:
\[
\widehat Y_R^{\mathrm{MV}}
\in
\operatorname*{arg\,max}_{k\in[K]}
\sum_{r=1}^{R}\indic{Y_r=k}.
\]
More generally, probabilistic aggregation methods such as GLAD or Rasch that will be presented later, estimate the posterior
distribution \(\mathbb P(Y=k\mid Y_1,\ldots,Y_R)\) and predict the most likely
class; \[
\widehat Y
\in
\operatorname*{arg\,max}_{k\in[K]}
\mathbb P\!\left(Y=k\mid Y_1,\ldots,Y_R\right).
\]

\paragraph{Rasch and GLAD models.}
This work belongs to the family of probabilistic approaches that jointly model annotator competence and item difficulty. One of the earliest and most influential models of this type is the Rasch model \citep{rasch1960probabilistic}, which is widely used in item response theory.
Consider a set of $m$ annotators and $n$ items, with $[n]:=\{1,\ldots,n\}$. Each annotator $r\in[m]$ is associated with an ability parameter $\theta_r^\star\in\mathbb{R}$, while each item $i\in[n]$ is associated with a difficulty parameter $\beta_i^\star\in\mathbb{R}$. 
Let \(Y_{ri}\) denote the response provided by respondent \(r\) to item \(i\), and let \(Y_i\) denote the corresponding correct answer. We define the correctness indicator as
\begin{equation}
C_{ri}
:=
\indic{Y_{ri}=Y_i}
\in\{0,1\},
\end{equation}
where, for an event \(A\), \(\indic{A}\) equals \(1\) if \(A\) occurs and \(0\) otherwise.
 The Rasch model assumes: 
\begin{equation}
C_{ri}
\sim
\operatorname{Bernoulli}
\left(
\sigma(\theta_r^\star-\beta_i^\star)
\right),
\label{Rasch-equation}
\end{equation}
where for any $x\in \mathbb{R},$
\begin{equation}\label{sigmoid}
\sigma(x)
:=
\frac{1}{1+\exp(-x)}, 
\end{equation}
is the logistic function. 
Thus, the probability of a correct response increases with the difference between the annotator's ability and the item's difficulty. In particular, when $\theta_r^\star-\beta_i^\star$ is large, this probability is close to one.

In educational testing, for example, $C_{ri}=1$ indicates that student $r$ correctly answers question $i$, $\theta_r^\star$ represents the student's ability, and $\beta_i^\star$ represents the difficulty of the question. A limitation of the Rasch model in crowdsourcing applications is that the correctness indicators $C_{ri}$ can only be observed when the correct answers, or equivalently the true item labels, are known. In standard label aggregation problems, however, these true labels are precisely the quantities that must be inferred.

The GLAD model \citep{whitehill2009glad} extends the ability--difficulty principle to crowdsourcing by treating the true item labels as latent variables and estimating them jointly with annotator abilities and item difficulties. Let $
\mathcal{O}
\subseteq
[m]\times[n]$
denote the set of observed annotator--item pairs. For each item $i\in[n]$, define the set of annotators who provided a label for item $i$:
\begin{equation}
\Omega_i
:=
\left\{
r\in[m]:(r,i)\in\mathcal{O}
\right\}.
\end{equation}

Each item $i$ has an unknown binary true label $Y_i\in \{0,1\}$. For every observed pair $(r,i)\in\mathcal{O}$, annotator $r$ provides a noisy label $Y_{ri}\in\{0,1\}$.
Each annotator $r$ is associated with an ability parameter $\alpha_r\in\mathbb{R}$:
when $\alpha_r>0$, the annotator is more likely to provide the correct label than an incorrect one; when $\alpha_r=0$, the annotator behaves as a random guesser; and when $\alpha_r<0$, the annotator is more likely to provide an incorrect label. Moreover, each item $i$ is associated with an inverse-difficulty parameter $\beta_i>0$, so larger values correspond to easier items.

Conditionally on the latent true labels and the model parameters, the observed annotations are assumed to be independent. GLAD models the probability that annotator $r$ correctly labels item $i$ in a very similar way as Rasch model does.
Given $
C_{ri}
:=
\indic{Y_{ri}=Y_i},$
they define
\begin{equation}
C_{ri}
\mid
Y_i\sim
\operatorname{Bernoulli}
\left(
\sigma(\alpha_r\beta_i)
\right).
\label{eq:glad-correct-prob1}
\end{equation}

Note that easier items having larger $\beta_i$ values amplify the effect of annotator expertise. 
Unlike in the Rasch model, the correctness indicators $C_{ri}$ are not directly observed in GLAD because the true labels $Y_i$ are unknown. The latent labels, annotator abilities, and item inverse difficulties must therefore be estimated jointly from the observed annotations.
\paragraph{Other aggregation models.}
Dawid--Skene \citep{dawid1979maximum} models each annotator through a
class-dependent confusion matrix and jointly estimates the latent labels and
annotator reliabilities using the EM algorithm. Unlike GLAD, it captures
class-dependent annotation behavior but does not model item difficulty.
More recently, CrowdFM \citep{crowdfm} introduced a foundation-model-based
approach to label aggregation. 
\paragraph{Imbalanced crowdsourcing.}
Very few works have explicitly investigated class imbalance in crowdsourcing, which is the setting this work focuses on. \citet{zhang} introduced the imbalanced multiple noisy labeling setting, in which annotators may exhibit different class-conditional labeling accuracies, and proposed the PLAT algorithm.  More recently, \citet{gilgonzalez2026focal} proposed the CCGPFL algorithm, a correlated Gaussian-process model trained with a weighted variational objective. Unlike classical label aggregation methods, CCGPFL additionally relies on item features to model instance-dependent annotator reliability.

\section{Majority voting under class-imbalance
}\label{theoreticalMV}
 Let
	the class-imbalance proportion be defined as:
\[
    \mu:=\min_{k\in [K]}\mathbb{P}(Y=k)
		\,\in\ \,]0,\frac{1}{K}].
\]

We denote by $\jm$ and $\jM$, respectively, the minority
and majority classes, i.e. \(
\jm := \operatorname*{arg\,min}_{k\in[K]}
\mathbb{P}(Y=k)\) and 
\(\jM := \operatorname*{arg\,max}_{k\in[K]}
\mathbb{P}(Y=k)\).
\paragraph{Crowd decomposition.}
For each annotator \(r\in\mathcal R_R\), we define the probability of correctly
labeling an item of class \(k\) as
\[
    p_r^{(k)}
    :=
    \mathbb P(Y_r=k\mid Y=k).
\]
and, we define for each class $k\in [K]$ the subsets of good and bad annotators:
\[
\begin{aligned}
\mathcal G_R^{(k)}
&:=\{r\in\mathcal R_R:\ p_r^{(k)}>\tfrac12+\delta_{k}\},\\
\mathcal B^{(k)}_R
&:=\{r\in\mathcal R_R:\ U^{(k)}\le p_r^{(k)}\le \tfrac12\}.
\end{aligned}
\] Here, \(\delta_k\in [0,\frac{1}{2}]\) is a minimum competence above random
guessing for class \(k\), while \(U^{(k)}\in [0,\frac{1}{2}[\) is a uniform (with respect to $R$) lower bound on
the class-\(k\) accuracies of the annotators.
Moreover, we call an annotator \(r\in\mathcal R_R\) a \emph{minority-class specialist} when
\begin{equation}\label{def:minority-specialist}
r\in\mathcal G_R^{(\jm)},
\end{equation}
and respectively a majority-class specialist when $r\in\mathcal G_R^{(\jM)}$.

For example, consider the binary case $K=2$ and assume that the minority class is indexed by $\jm=0$. Then, $\mathcal{G}_{R}^{(0)}$ denotes the minority specialists, while $\mathcal{G}_{R}^{(1)}$ denotes the majority specialists.

\paragraph{Imbalanced Condorcet Jury Theorem.}
We first establish conditions on the crowd composition under which majority
voting asymptotically recovers every class, including the minority classes,
regardless of the severity of the underlying class imbalance.

For each \(k\in[K]\), let
\(
e_R(k)
:=
\mathbb P\left(
\widehat Y_R^{\mathrm{MV}}\neq Y
\mid Y=k
\right)\) the conditional error and 
\(\pi_k:=\mathbb P(Y=k)
\) the ground truth distribution. 

Throughout the paper, we assume that the random variables
\(\left(Y_r\right)_{r\in\mathcal R_R}\) are conditionally independent given
\(Y\).

\begin{boxedtheorem}\label{Condorcet}
    Assume that for any $k\in [K]$
    \begin{equation}\label{conditionclasseMV}
      \liminf_{R\to \infty}\frac{ \big| \mathcal{G}^{(k)}_{R}\big|}{R} (1+2\delta_{k}) + 2\frac{\big|\mathcal{B}^{(k)}_{R}\big |}{R}U^{(k)}>1.
    \end{equation}
		It follows that
$$\lim_{R\to  \infty}\mathbb{P}(\widehat Y_R^{\textup{MV}}\neq Y)=0.$$
\end{boxedtheorem}
\begin{proofbox}
For every \(R\geq 1\), the law of total probability gives
\[
\mathbb P\!\left(\widehat Y_R^{\mathrm{MV}}\neq Y\right)
=
\sum_{k\in[K]}
\pi_k e_R(k).
\]
Under condition~\eqref{conditionclasseMV}, Proposition~D.3 of
\citet{singer2026optimalfairaggregationcrowdsourced} applies to each class
\(k\in[K]\), yielding
\[
\lim_{R\to\infty}e_R(k)=0.
\]
Since \(K\) is finite,
\[
\lim_{R\to\infty}
\mathbb P\!\left(\widehat Y_R^{\mathrm{MV}}\neq Y\right)
=
\sum_{k\in[K]}
\pi_k
\lim_{R\to\infty}e_R(k)
=0.
\]
Since for any $R\geq 1$ $$\mathbb P\!\left(\widehat Y_R^{\mathrm{MV}}=Y\right)=1-\mathbb P\!\left(\widehat Y_R^{\mathrm{MV}}\neq Y\right),$$ it follows that:
\[
\lim_{R\to\infty}
\mathbb P\!\left(\widehat Y_R^{\mathrm{MV}}=Y\right)
=1.
\]
\end{proofbox}

This is a generalization to the multidimension and multiclass case of Proposition D.$3$ in \citet{singer2026optimalfairaggregationcrowdsourced}.

\textbf{Interpretation:}
Condition~\eqref{conditionclasseMV} provides
for each class \(k\), a lower bound on the number of class-\(k\) specialists
required to compensate for the presence of adversarial annotators.
Indeed, for sufficiently large \(R\), the condition implies
\[
\big|\mathcal G_R^{(k)}\big|
>
\frac{
R-2\big|\mathcal B_R^{(k)}\big|U^{(k)}
}{
1+2\delta_k
}.
\]
In the extreme case \(U^{(k)}\approx 0\), the contribution of the
non-specialists becomes negligible, and the condition approximately reduces to
\[
\big|\mathcal G_R^{(k)}\big|
>
\frac{R}{1+2\delta_k}.
\]
Therefore, if the crowd is made of the non-specialists that are arbitrarily inaccurate on class \(k\),
consistency requires to have a sufficiently large fraction of specialists.
\paragraph{Asymptotic preservation of class imbalance.}
A desirable property of an aggregator is to preserve the underlying class
distribution. This type of property has previously been studied by \citet{zhang} in the binary
case with a homogeneous crowd. Our result extends their analysis to
heterogeneous multiclass crowds.

    \begin{boxedcorollary}\label{mvimbalancedratio}
    Let $R\geq 1$ and $\mu$ be the imbalanced proportion of the ground truth $Y$. 
    Define the majority vote imbalanced proportion: for any $R\geq 1,$ $$\mu^{\textup{MV}}_{R}:=\min_{k\in [K]}\mathbb{P}\left(\widehat Y^{\textup{MV}}_{R}=k\right).$$
    Assume that Eq.~\eqref{conditionclasseMV} holds, then:
    $$\lim_{R\to \infty}\mu^{\textup{MV}}_{R}=\mu.$$
    \end{boxedcorollary}

\begin{proofbox}
Since \eqref{conditionclasseMV} holds: $\lim_{R\to \infty}\mathbb{P}\left(\widehat Y^{MV}_{R}\neq Y\right)=0$. Plus, for any $R\geq 1$ and any $k\in [K]$ we have that 
$$\left|\mathbb{P}\left(\widehat Y^{MV}_{R}=k\right)-\mathbb{P}\left(Y=k\right)\right|\leq \mathbb{P}\left(\widehat Y^{MV}_{R}\neq Y\right),$$ thus for any $R\geq 1:$
$$m_{R}:=\max_{k\in K}\left\{\left|\mathbb{P}\left(\widehat Y^{MV}_{R}=k\right) -\mathbb{P}\left(Y=k\right)\right|\right\}\underset{R\to \infty}{\rightarrow} 0.$$
We conclude by noticing that, for any $R\geq 1$:
\[
\begin{aligned}
\left|\mu_R^{\mathrm{MV}}-\mu\right|
&=
\left|
\min_{k\in[K]}
\mathbb P\!\left(\widehat Y_R^{\mathrm{MV}}=k\right)
-
\min_{k\in[K]}
\mathbb P(Y=k)
\right| \\
&\leq
\max_{k\in[K]}
\left|
\mathbb P\!\left(\widehat Y_R^{\mathrm{MV}}=k\right)
-
\mathbb P(Y=k)
\right|\underset{R\to \infty}{\rightarrow} 0.
\end{aligned}
\]
\end{proofbox}

\section{The \Method Model}\label{sec:theModel}

\subsection{Motivation}
Theorem~\ref{Condorcet} essentially shows that majority voting
requires the crowd to be sufficiently competent on each class. In particular,
performing sufficiently well on the minority class may be difficult to guarantee in practice under class imbalance. We therefore propose a probabilistic model that accounts for both annotation difficulty and the
class-dependent ability of each annotator to recover the ground truth. 

Our second motivation comes from the observation that GLAD does not distinguish
the probability of error of an annotator depending on the value taken by the latent ground truth label. Consequently, GLAD cannot account for
class-dependent annotation behaviors, which are particularly important in the
imbalanced setting. More formally we have:

    \begin{boxedproposition}
   Under the GLAD model Eq.~\ref{eq:glad-correct-prob1}, the sensitivity is equal to specificity. For any $r\in [n]$ and for any $i\in \Omega_r$, we have that:
    $$ \mathbb{P}\left(Y_{ri}=1\mid Y_i=1\right)=\mathbb{P}\left(Y_{ri}=0\mid Y_i=0\right)$$
\end{boxedproposition}

\subsection{The model}
\label{subsec:model_presentation}
We introduce \Method, which stands for \emph{Class-Conditional Rasch}. 
Let $\sigma$ be the sigmoid function (Eq.~\eqref{sigmoid}) and $Y_i\in\{0,\dots,K-1\}$ the latent true class of item $i$, $Y_{ir}\in\{0,\dots,K-1\}$ the label given by annotator $r$ on item $i$. Let, for all $(i,r)\in \Omega_{r}\times[R]$ \(Z_{ir}:=\mathbf 1\{Y_{ir}=Y_i\}\) the indicator that annotator $r$ recovers the true class of item $i$. We assume that:
\[
Z_{ir}\mid Y_i=k
\sim
\mathrm{Bernoulli}(p_{ir,k}),
\quad
p_{ir,k}=\sigma(\alpha_{r,k}-\beta_{i,k}).
\]
 Inspired by mixed-effects models \citep{mun2021mixed}, we split each term into a class-level average and an individual deviation from it:
\[
\alpha_{r,k}=\mu_{\alpha,k}+g_{r,k},
\qquad
\beta_{i,k}=\mu_{\beta,k}+h_{i,k},
\]
where $\mu_{\alpha,k}$ is the crowd average ability. While  $\mu_{\beta,k}$ is the average item difficulty for class $k$. $g_{r,k}$ and $h_{i,k}$ capture how far annotator $r$ and item $i$ deviate from their respective average.
To 
interpret $\mu_{\alpha}$ and $\mu_{\beta}$ as the average ability (respectively difficulty) we impose for any $k \in [K]$: $\sum_{i}h_{i,k}=\sum_{r}g_{r,k}=0$.

Following \citet{whitehill2009glad, crowdfm}, we fix this translation invariance with Gaussian priors:
\[
g_{r,k} \sim \mathcal{N}(0,\sigma_{\alpha,k}^2),
\qquad
h_{i,k} \sim \mathcal{N}(0,\sigma_{\beta,k}^2).
\]
The conditional law of the observed annotation follows:
\begin{equation}\label{PhiRashK}
    \mathbb P(Y_{ir}=y\mid Y_i)
=
p_{ir,k}^{\mathbf 1\{y=Y_i\}}
\left(
\frac{1-p_{ir,k}}{K-1}
\right)^{\mathbf 1\{y\neq Y_i\}},
\end{equation}
i.e.\  we spread the errors uniformly over the $K-1$ remaining classes.

We estimate the model parameters using the Expectation-Maximization (EM) algorithm \citep{dempster1977em}, a widely used approach for maximum-likelihood estimation in latent-variable models. 
EM alternates between computing the conditional distribution of the latent variables given the observed data and the current parameter estimates (E-step), and maximizing an auxiliary function 
that corresponds to the conditional expected log-likelihood (M-step).
 We now detail these two steps for our model.

The unknown ground-truth label \( Y_i\in[K],\) is treated as a latent variable. Let \( (\pi_k)_{k\in[K]}\) denote the ground-truth class distribution, where \(\pi_k:=\mathbb P(Y=k)\).
Here the \Method parameters that one need to estimate are:
\[
\theta
:=
\left(
\mu_\alpha,\mu_\beta,G,H
\right),
\]
where
\(
\mu_\alpha,\mu_\beta\in\mathbb R^K\) and 
\(
G=(g_{r,k})_{r,k}\in\mathbb R^{m\times K}\) and \(
H=(h_{i,k})_{i,k}\in\mathbb R^{n\times K}.\) 

Where $m$ denotes the number of annotators, $n$ the number of items and $K$ the number of classes. 

Conditional on \(Y_i=k\), the annotation likelihood is
\[
\mathbb P_\theta(Y_{ri}=y\mid Y_i=k)
=
p_{ir,k}^{\mathbf 1\{y=k\}}
\left(
\frac{1-p_{ir,k}}{K-1}
\right)^{\mathbf 1\{y\neq k\}}.
\]
Let
\(
\ell_{ir,k}(\theta)
:=
\log
\mathbb P_\theta(Y_{ri}\mid Y_i=k).
\)

We estimate \(\theta\) using an EM algorithm applied to a penalized complete-
data log-likelihood, as it has been done in \cite{whitehill2009glad}. The latent variables handled by the E-step are the
unknown labels \(Y_1,\ldots,Y_n\); the quantities
\(\mu_\alpha,\mu_\beta,G\), and \(H\) are optimized in the M-step.

\paragraph{E-step.}

Given the parameters at step $s$, 
\(\theta^{(s)}\), the E-step computes the latent posterior:
\[
q_{ik}^{(s)}
:=
\mathbb P_{\theta^{(s)}}
\left(
Y_i=k\mid(Y_{r,i})_r
\right).
\]
\paragraph{M-step.}
For \(\xi\in\{\alpha,\beta\}\), define
\[
\lambda_{\xi,\mu}:=(2\sigma_{\xi,\mu}^{2})^{-1},
\qquad
\lambda_{\xi,k}:=(2\sigma_{\xi,k}^{2})^{-1},
\quad k\in[K],
\]
and let
\(
\Lambda_\xi
:=
\operatorname{diag}(\lambda_{\xi,1},\ldots,\lambda_{\xi,K})
\).
Define also
\[
\mathcal R(\theta):
=
\lambda_{\alpha,\mu}\|\mu_\alpha\|_2^2
+
\lambda_{\beta,\mu}\|\mu_\beta\|_2^2
+
\|G\Lambda_\alpha^{1/2}\|_F^2
+
\|H\Lambda_\beta^{1/2}\|_F^2.
\]

A direct calculation shows that at iteration $s$:
\begin{equation}
\label{eq:Q-map}
\begin{aligned}
Q_s(\theta)
&:=
\mathbb E_{Y\mid y;\,\theta^{(s)}}
\left[
\log p_\theta(y,Y)
\right]
-
\mathcal R(\theta)\\
&\phantom{:}=
\sum_{i=1}^{n}
\sum_{k=0}^{K-1}
q_{ik}^{(s)}
\left[
\log\pi_k
+
\sum_{r\in\Omega_i}
\ell_{ir,k}(\theta)
\right]
-
\mathcal R(\theta).
\end{aligned}
\end{equation}
The parameter update is therefore
\begin{equation}\label{Moptim}
 \theta^{(s+1)}
\in
\operatorname*{arg\,max}_{\theta\in\mathcal C}
Q_s(\theta),
\end{equation}
 where
\[
\mathcal C
:=
\left\{
(\mu_\alpha,\mu_\beta,G,H):
\sum_{r=1}^{m}g_{r,k}=
\sum_{i=1}^{n}h_{i,k}=0,\;
k\in[K]
\right\}.
\]

\begin{boxedproposition}
    \label{prop:unique-m-step}
Let
\(\{q_{ik}^{(s)}\}_{i,k}\) be the fixed posterior latent and fixed class proportions
\(\pi_k\), the \(M\)-step problem defined in
Eq.~\eqref{Moptim} admits a unique
maximizer \(\theta^{(s+1)}\) over \(\mathcal C\).
\end{boxedproposition}

The proof relies mainly on the following Theorem, the interested reader can find a proof in \cite{allaire}: 
\begin{theorem}
\label{thm:app-coercive-minimization}
Let \(C\subseteq\mathbb R^d\) be a nonempty, closed, and convex set, and let
\(f:C\to\mathbb R\). If \(f\) is convex and coercive on \(C\), then \(f\)
attains its minimum on \(C\).
\end{theorem}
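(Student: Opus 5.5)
The plan is the classical compactness argument: reduce to a bounded sublevel set, then extract a convergent minimizing sequence.

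One caveat comes first. As worded, the statement requires \(f\) to be lower semicontinuous on \(C\) (which is how \cite{allaire} states it), and convexity alone does not provide this at boundary points. For example, on \(C=[0,1]\) the function \(f(0)=1\), \(f(x)=x\) for \(x>0\), is convex and trivially coercive, but its infimum \(0\) is not attained. I will therefore prove the statement for \(f\) convex, coercive and lower semicontinuous on \(C\). This covers the intended application, where \(f=-Q_s\) is smooth on all of \(\mathbb R^d\). Convexity is not actually used in the existence argument; it matters only for uniqueness questions.

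\textbf{Step 1 (reduce to a bounded set).} Since \(C\neq\emptyset\), fix \(x_0\in C\) and define the sublevel set \(S:=\{x\in C:\ f(x)\le f(x_0)\}\), which contains \(x_0\). Coercivity means \(f(x)\to+\infty\) as \(\|x\|\to\infty\) with \(x\in C\). Hence there is \(M>0\) such that \(f(x)>f(x_0)\) whenever \(x\in C\) and \(\|x\|>M\), so \(S\subseteq \overline{B}(0,M)\). Moreover, \(\inf_C f=\inf_S f\), because every point of \(C\setminus S\) has value larger than \(f(x_0)\ge \inf_S f\).

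\textbf{Step 2 (closedness and boundedness from below).} \(S\) is closed. Indeed, if \(x_n\in S\) and \(x_n\to x\), then \(x\in C\) because \(C\) is closed, and lower semicontinuity gives \(f(x)\le\liminf f(x_n)\le f(x_0)\). Thus \(S\) is closed and bounded in \(\mathbb R^d\), hence compact. Next, \(f\) is bounded below on \(S\). Otherwise there would be \(x_n\in S\) with \(f(x_n)\to-\infty\); a subsequence converges to some \(x\in S\), and lower semicontinuity gives \(f(x)=-\infty\), contradicting that \(f\) is real-valued.

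\textbf{Step 3 (attainment).} Let \(m:=\inf_S f\in\mathbb R\) and take a minimizing sequence \(x_n\in S\) with \(f(x_n)\to m\). By compactness (Bolzano--Weierstrass), a subsequence converges to some \(x^\star\in S\subseteq C\). Lower semicontinuity yields \(f(x^\star)\le \liminf f(x_{n_j})=m\), so \(f(x^\star)=m=\inf_C f\), and the minimum is attained.

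\textbf{Main obstacle.} The only delicate point is the regularity hypothesis needed to make \(S\) closed and to pass to the limit in \(f\); everything else is routine. Had one wanted to use convexity alone, the natural route would be continuity of convex functions on the relative interior of \(C\). The counterexample above shows this route cannot handle the boundary, so lower semicontinuity must be assumed. In the M-step application it holds automatically, because \(-Q_s\) is a finite sum of smooth functions of \(\theta\) defined on all of \(\mathbb R^d\), and the constraint set \(\mathcal C\) is a linear subspace, hence closed and convex.
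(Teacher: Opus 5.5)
Your counterexample is valid: as printed, the statement is false, since convexity gives no lower semicontinuity at boundary points of $C$. Your proof of the corrected version, with $f$ lower semicontinuous on $C$, is correct. The paper offers no proof beyond citing \cite{allaire}, whose finite-dimensional existence theorem (as I recall it) assumes continuity, which is stronger than your hypothesis. That theorem uses the same minimizing-sequence and Bolzano--Weierstrass argument as yours, with convexity playing no role. Since $f_s=-Q_s$ is smooth on all of $\mathbb R^d$, the omitted hypothesis is harmless for Proposition~\ref{prop:unique-m-step}.
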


\begin{lemma}\label{closedconvexeC}
    Let $C$ be the set defined as 
    \[
\left\{
(\mu_\alpha,\mu_\beta,G,H):
\sum_{r=1}^{
m} g_{r,k}=
\sum_{i=1}^{n} h_{i,k}=0,\
k\in[K]
\right\}.
\]
$C$ is closed and convex. 
\end{lemma}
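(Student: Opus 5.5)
The plan is to show that $C$ is the kernel of a linear map on the finite-dimensional space
\[
E:=\mathbb R^K\times\mathbb R^K\times\mathbb R^{m\times K}\times\mathbb R^{n\times K}.
\]
Once this is established, both closedness and convexity follow from general facts.

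First, I define
\[
L:E\to\mathbb R^{2K},\qquad
L(\mu_\alpha,\mu_\beta,G,H):=\Bigl(\bigl(\textstyle\sum_{r=1}^m g_{r,k}\bigr)_{k\in[K]},\,\bigl(\textstyle\sum_{i=1}^n h_{i,k}\bigr)_{k\in[K]}\Bigr).
\]
Each coordinate is a finite sum of coordinate projections, so $L$ is linear. By definition, $C=L^{-1}(\{0\})=\ker L$. Equivalently, $L(\theta)=(G^\top\mathbf 1_m,\,H^\top\mathbf 1_n)$.

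For convexity, take $\theta,\theta'\in C$ and $t\in[0,1]$. Linearity gives
\[
L\bigl(t\theta+(1-t)\theta'\bigr)=tL(\theta)+(1-t)L(\theta')=0,
\]
so $t\theta+(1-t)\theta'\in C$. In fact $C$ is a linear subspace, which is stronger than convex. It is also nonempty, since $0\in C$; this nonemptiness is what the later application of Theorem~\ref{thm:app-coercive-minimization} needs.

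For closedness, every linear map between finite-dimensional normed spaces is continuous. For this $L$ one can check it directly: by Cauchy--Schwarz, $|\sum_r g_{r,k}|\le\sqrt m\,\|G\|_F$, and similarly for $H$, so $L$ is Lipschitz. Since $\{0\}$ is closed, its preimage $C$ is closed. Alternatively, for a sequential argument, take $\theta^{(j)}\to\theta$ in $C$. Each constraint $\sum_r g^{(j)}_{r,k}=0$ is a finite sum of convergent coordinates, so passing to the limit gives $\sum_r g_{r,k}=0$; the same holds for $H$.

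I expect no real obstacle here. The only step that needs care is making the ambient space and the linear map $L$ explicit, so that continuity and linearity are evident rather than asserted.
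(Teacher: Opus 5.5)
Your proof is correct and follows essentially the same route as the paper, which checks convexity by showing the constraints are preserved under convex combinations and proves closedness by the same sequential limit in finite sums that you give as your alternative. Writing $C$ as $\ker L$ is a cleaner packaging of those computations, and it also shows that $C$ is a linear subspace.
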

\begin{proofbox}
    \begin{proof}
Let
\[
u=(\mu_\alpha,\mu_\beta,G,H)\in\mathcal C,
\qquad
v=(\mu_\alpha',\mu_\beta',G',H')\in\mathcal C.
\]
For any \(\lambda\in[0,1]\) and every \(k\in[K]\),
\[
\sum_{r=1}^{m}
\left(
\lambda g_{r,k}+(1-\lambda)g'_{r,k}
\right)
=
\lambda\sum_{r=1}^{m}g_{r,k}
+
(1-\lambda)\sum_{r=1}^{m}g'_{r,k}
=
0,
\]
and similarly,
\[
\sum_{i=1}^{n}
\left(
\lambda h_{i,k}+(1-\lambda)h'_{i,k}
\right)
=
0.
\]
It follows that \(\lambda u+(1-\lambda)v\in\mathcal C\)

To prove closedness, let \((u_j)_{j\geq 1}\subseteq\mathcal C\) such that
\(u_j\to u\). Writing
\[
u_j=(\mu_{\alpha,j},\mu_{\beta,j},G_j,H_j),
\]
we have, for every \(j\) and every \(k\in[K]\),
\[
\sum_{r=1}^{m}g_{r,k,j}=0,
\qquad
\sum_{i=1}^{n}h_{i,k,j}=0.
\]
Because these are finite sums, we may pass to the limit:
\[
\sum_{r=1}^{m}g_{r,k}
=
\lim_{j\to\infty}
\sum_{r=1}^{m}g_{r,k,j}
=
0,
\]
and similarly
\[
\sum_{i=1}^{n}h_{i,k}=0.
\]
Therefore \(u\in\mathcal C\), and \(\mathcal C\) is closed.
\end{proof}
\end{proofbox}
The preceding lemma establishes that the feasible set \(\mathcal C\) is closed
and convex; it is also clearly nonempty, since it contains the zero parameter
vector. We can therefore use Theorem~\ref{thm:app-coercive-minimization} to
study the \(M\)-step. More precisely, maximizing \(Q_s\) over \(\mathcal C\) is
equivalent to minimizing
\[
f_s:=-Q_s
\]
over the same set. It remains to show that \(f_s\) is coercive and strictly
convex. Coercivity will guarantee the existence of a minimizer, while strict
convexity will ensure its uniqueness.

\begin{proofbox}
\begin{proof}
First, \(f_s\) is convex. For fixed posterior weights
\(\{q_{ik}^{(s)}\}_{i,k}\), the Hessian of \(Q_s\) with respect to
\(\theta=(\mu_\alpha,\mu_\beta,G,H)\) is
\[
\nabla_{\theta}^{2}Q_s
=
-
\sum_{i=1}^{n}
\sum_{k\in[K]}
q_{ik}^{(s)}
\sum_{r\in\Omega_i}
p_{ir,k}(1-p_{ir,k})
\,x_{ir,k}x_{ir,k}^{\top}
-
\nabla_{\theta}^{2}\mathcal R,
\]
where
\[
x_{ir,k}:=\nabla_\theta\eta_{ir,k}.
\]
Since \(p_{ir,k}(1-p_{ir,k})\geq 0\) and
\(x_{ir,k}x_{ir,k}^{\top}\) is symmetric positive semidefinite. Moreover $\mathcal{R}$ is strictly convex. Hence \(Q_s\) is strictly concave and
\(f_s=-Q_s\) is strictly convex.

Next, \(f_s\) is coercive. Since
\[
\log\pi_k\leq 0
\qquad\text{and}\qquad
\ell_{ir,k}(\theta)\leq 0,
\]
we have that:
\begin{equation}\label{RQ}
    -Q_s(\theta)\geq \mathcal R(\theta).
\end{equation}

\[
\mathcal R(\theta)\longrightarrow+\infty
\qquad\text{as}\qquad
\|\theta\|\longrightarrow\infty.
\]
Therefore using: \eqref{RQ} it follows that:
\[
-Q_s(\theta)\longrightarrow+\infty
\qquad\text{as}\qquad
\|\theta\|\longrightarrow\infty,
\]
so \(f_s\) is coercive.

Finally, \(\mathcal C\) is nonempty, closed, and convex by
Lemma~\ref{closedconvexeC}. Thus \(f_s\) attains a minimum on
\(\mathcal C\), and strict convexity guarantees that this minimizer is unique.
Equivalently, \(Q_s\) admits a unique maximizer on \(\mathcal C\).
\end{proof}
\end{proofbox}

\section{Experimental Evaluation}\label{sec:exps}
\label{sec:experiments}
Our experimental analysis is made of three 
parts. We first evaluate aggregation methods on a controlled synthetic crowdsourcing
benchmark designed to reproduce two sources of imbalance: class imbalance in the
ground-truth labels and class-dependent imbalance in annotator expertise.
Then we evaluate our \Method model across $27$ real-world datasets. 
Finally, we analyze how Gold Majority Vote \citep{Ustalov_2024} evolves as the
number of available gold labels increases, and compare it performances on recall with \Method.
In what follows, we define the class-imbalance ratio as
\[
\mathrm{Imb}
:=
\frac{
    \max_{k\in[K]} \mathbb P(Y=k)
}{
    \min_{k\in[K]} \mathbb P(Y=k)
}.
\]
A value of \(\mathrm{Imb}=1\) corresponds to a balanced class distribution,
while a large value indicate a stronge class imbalance.

\subsection{Synthetic Data}
\label{subsec:synthetic_data}

\paragraph{Dataset.} 
The synthetic dataset contains $3{,}000$ items, with an imbalanced ratio of $\text{Imb}\approx 7 $ belonging to the minority class. Each item is annotated by $8$ annotators. Annotators are
partitioned into $4$ accuracy profiles: majority-class specialists, minority-class specialists, annotators accurate on both classes, and annotators inaccurate on both classes. Items are assigned either an easy or a
hard difficulty level. 
More formally: 
For each item \(i\), we sample a ground-truth label
\[
Y_i \sim \operatorname{Bernoulli}(\pi),
\]
where class \(1\) is the minority class, and an item-difficulty indicator
\[
H_i \sim \operatorname{Bernoulli}(\rho).
\]
Each annotator \(r\) belongs to a type
\(T_r\in\{\mathrm{good},\mathrm{maj},\mathrm{bad},\mathrm{min}\}\),
with fixed type proportions. For every observed annotation \((i,r)\),
\[
\mathbb P(Y_{ir}=Y_i\mid Y_i=k,H_i=h,T_r=t)
=
q_{t,k}-h\delta,
\]
where \(q_{t,k}\) is the class-conditional reliability of annotator type \(t\)
and \(\delta\) is the hard-item penalty. Otherwise, the annotator outputs
\(1-Y_i\). Each item receives
\[
N_i=1+\operatorname{Poisson}(\lambda-1)
\]
annotations, with annotators sampled uniformly without replacement.
In the controlled sweep, the proportions of reliable and unreliable
annotators are fixed, while the mass of majority and minority specialists is
redistributed between the two groups.
\paragraph{Robustness to fewer minority specialists.}
We study the robustness of aggregation methods when the number of
minority-class specialists (see Eq.~\eqref{def:minority-specialist}) decreases while the number of majority-class
specialists increases.

\paragraph{Results.} Figure~\ref{fig:minority-recall-fixed-crowd} shows that
minority recall deteriorates for standard aggregation methods as minority
expertise becomes scarce, while \Method remains more stable.

\begin{figure}[!t]
    \centering
    \includegraphics[width=1\columnwidth]{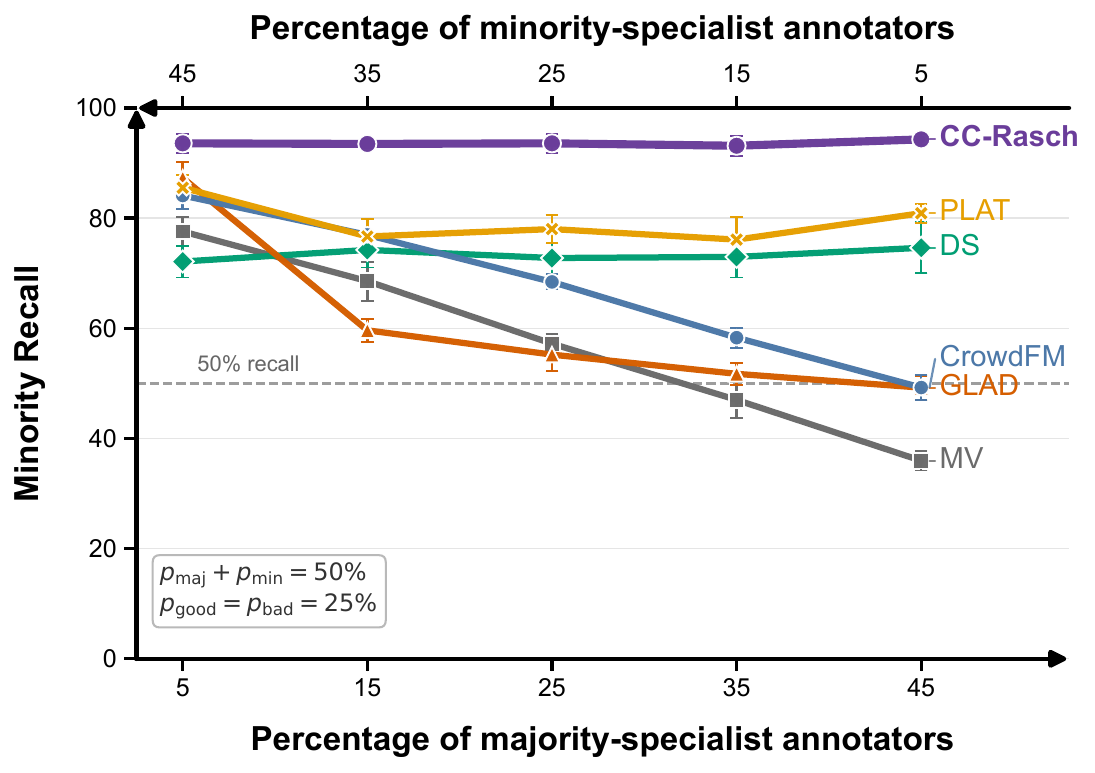}
    \caption{
Minority-class recall as the proportion of majority-class specialists
\(p_{\mathrm{maj}}\) increases and the proportion of minority-class specialists
\(p_\mathrm{min}\) decreases.
Their total proportion is fixed at
\(p_{\mathrm{maj}}+p_{\mathrm{\min}}=50\%\), while
\(p_{\mathrm{good}}=p_{\mathrm{bad}}=25\%\).
Points report mean minority recall and error bars represent 95\% confidence
intervals.
}
    \label{fig:minority-recall-fixed-crowd}
\end{figure}
\subsection{Real World datasets}

\begin{table*}[!t]
\centering
\renewcommand{\arraystretch}{1.25}
\setlength{\tabcolsep}{3pt}

\resizebox{\textwidth}{!}{%
\begin{tabular}{lcccccccc}
\toprule
\textbf{Method}
& \multicolumn{2}{c}{\textbf{All}}
& \multicolumn{2}{c}{\textbf{Imb. $\geq 3$}}
& \multicolumn{2}{c}{\textbf{Ann./item $\geq 10$}}
& \multicolumn{2}{c}{\textbf{Ann./item $<5$}}
\\
\cmidrule(lr){2-3}
\cmidrule(lr){4-5}
\cmidrule(lr){6-7}
\cmidrule(lr){8-9}
& \textbf{Recall $\uparrow$}
& \textbf{Bal.-Acc. $\uparrow$}
& \textbf{Recall $\uparrow$}
& \textbf{Bal.-Acc. $\uparrow$}
& \textbf{Recall $\uparrow$}
& \textbf{Bal.-Acc. $\uparrow$}
& \textbf{Recall $\uparrow$}
& \textbf{Bal.-Acc. $\uparrow$}
\\
\midrule

Majority Vote
& $0.732 \pm 0.209$
& $0.809 \pm 0.126$
& $0.732 \pm 0.138$
& $0.787 \pm 0.117$
& $0.726 \pm 0.233$
& $0.801 \pm 0.140$
& $0.686 \pm 0.206$
& $0.786 \pm 0.111$
\\

Dawid--Skene~\citeyearpar{dawid1979maximum}
& $0.811 \pm 0.204$
& \second{0.844 \pm 0.123}
& $0.803 \pm 0.130$
& \best{0.814 \pm 0.102}
& $0.830 \pm 0.233$
& \second{0.851 \pm 0.142}
& $0.681 \pm 0.184$
& $0.785 \pm 0.093$
\\

GLAD~\citeyearpar{whitehill2009glad}
& $0.720 \pm 0.271$
& $0.812 \pm 0.140$
& $0.776 \pm 0.161$
& \second{0.806 \pm 0.111}
& $0.732 \pm 0.278$
& $0.813 \pm 0.150$
& $0.524 \pm 0.338$
& $0.732 \pm 0.150$
\\

CrowdFM~\citeyearpar{crowdfm}
& $0.739 \pm 0.211$
& $0.817 \pm 0.126$
& $0.734 \pm 0.144$
& $0.791 \pm 0.107$
& $0.754 \pm 0.232$
& $0.819 \pm 0.138$
& $0.621 \pm 0.204$
& $0.762 \pm 0.109$
\\

PLAT
& \best{0.859 \pm 0.199}
& $0.821 \pm 0.125$
& \second{0.881 \pm 0.118}
& $0.781 \pm 0.116$
& \second{0.840 \pm 0.247}
& $0.818 \pm 0.142$
& \best{0.873 \pm 0.153}
& \second{0.792 \pm 0.112}
\\

\textcolor{darkblue}{\textbf{\Method}} (OURS)
& \second{0.859 \pm 0.190}
& \best{0.847 \pm 0.127}
& \best{0.888 \pm 0.064}
& $0.802 \pm 0.106$
& \best{0.851 \pm 0.239}
& \best{0.853 \pm 0.149}
& \second{0.815 \pm 0.107}
& \best{0.793 \pm 0.097}
\\

\bottomrule
\end{tabular}%
}

\caption{
Performance on  binary datasets.
}
\label{tab:regimes_binary_recall_balacc_with_plat}
\end{table*}

\begin{table*}[!t]
\centering
\small
\renewcommand{\arraystretch}{1.25}
\setlength{\tabcolsep}{3pt}
\resizebox{\textwidth}{!}{%
\begin{tabular}{lcccccccc}
\toprule
\textbf{Method}
& \multicolumn{2}{c}{\textbf{All}}
& \multicolumn{2}{c}{\textbf{Imb $\geq 3$}}
& \multicolumn{2}{c}{\textbf{Ann/item $\geq 10$}}
& \multicolumn{2}{c}{\textbf{Ann/item $<5$}} \\
\cmidrule(lr){2-3}
\cmidrule(lr){4-5}
\cmidrule(lr){6-7}
\cmidrule(lr){8-9}
& \textbf{Recall $\uparrow$}
& \textbf{Bal-Acc $\uparrow$}
& \textbf{Recall $\uparrow$}
& \textbf{Bal-Acc $\uparrow$}
& \textbf{Recall $\uparrow$}
& \textbf{Bal-Acc $\uparrow$}
& \textbf{Recall $\uparrow$}
& \textbf{Bal-Acc $\uparrow$} \\
\midrule

Majority Vote 
& $0.659 \pm 0.341$
& $0.740 \pm 0.119$
& $0.185 \pm 0.128$
& \second{0.745 \pm 0.043}
& $0.520 \pm 0.476$
& $0.812 \pm 0.128$
& $0.831 \pm 0.143$
& $0.721 \pm 0.048$ \\

Dawid--Skene~\citeyearpar{dawid1979maximum}
& \second{0.691 \pm 0.316}
& \second{0.765 \pm 0.109}
& \second{0.296 \pm 0.128}
& $0.739 \pm 0.010$
& \best{0.582 \pm 0.418}
& \best{0.826 \pm 0.123}
& \best{0.903 \pm 0.093}
& \best{0.772 \pm 0.021} \\

GLAD~\citeyearpar{whitehill2009glad}
& $0.675 \pm 0.344$
& $0.760 \pm 0.112$
& $0.185 \pm 0.128$
& $0.743 \pm 0.031$
& $0.520 \pm 0.476$
& $0.819 \pm 0.126$
& $0.877 \pm 0.108$
& $0.761 \pm 0.040$ \\

CrowdFM~\citeyearpar{crowdfm}
& $0.656 \pm 0.375$
& $0.758 \pm 0.121$
& $0.139 \pm 0.192$
& $0.738 \pm 0.039$
& $0.479 \pm 0.524$
& $0.811 \pm 0.130$
& \second{0.881 \pm 0.102}
& $0.762 \pm 0.044$ \\

\textcolor{darkblue}{\textbf{\Method}} (OURS)
& \best{0.722 \pm 0.295}
& \best{0.770 \pm 0.115}
& \best{0.370 \pm 0.257}
& \best{0.754 \pm 0.037}
& \second{0.576 \pm 0.412}
& \second{0.823 \pm 0.122}
& $0.871 \pm 0.138$
& \second{0.771 \pm 0.029} \\

\bottomrule
\end{tabular}%
}
\caption{Performance on multiclass datasets.}
\label{tab:regimes_multiclass_recall_balacc}
\end{table*}

\begin{table*}[!t]
\centering
\scriptsize
\renewcommand{\arraystretch}{1.35}
\setlength{\tabcolsep}{3pt}

\begin{tabular*}{\textwidth}{@{\extracolsep{\fill}}lcccccc}
\toprule
\textbf{Method}
& \multicolumn{3}{c}{\textbf{Large-scale annotations}}
& \multicolumn{3}{c}{\textbf{Large-scale items}} \\
\cmidrule(lr){2-4}
\cmidrule(lr){5-7}

& \textbf{Recall $\uparrow$}
& \textbf{Bal-Acc $\uparrow$}
& \textbf{F1 $\uparrow$}
& \textbf{Recall $\uparrow$}
& \textbf{Bal-Acc $\uparrow$}
& \textbf{F1 $\uparrow$} \\
\midrule

Majority Vote 
& $0.712 \pm 0.206$
& $0.796 \pm 0.158$
& $0.798 \pm 0.158$
& \second{0.747 \pm 0.235}
& $0.794 \pm 0.204$
& $0.794 \pm 0.200$ \\

Dawid--Skene~\citeyearpar{dawid1979maximum}
& \second{0.766 \pm 0.237}
& \second{0.835 \pm 0.146}
& \best{0.831 \pm 0.149}
& $0.723 \pm 0.302$
& \best{0.822 \pm 0.184}
& \best{0.830 \pm 0.185} \\

GLAD~\citeyearpar{whitehill2009glad}
& $0.699 \pm 0.306$
& $0.802 \pm 0.175$
& $0.793 \pm 0.198$
& $0.740 \pm 0.252$
& $0.806 \pm 0.190$
& \second{0.818 \pm 0.186} \\

CrowdFM~\citeyearpar{crowdfm}
& $0.722 \pm 0.223$
& $0.805 \pm 0.162$
& $0.804 \pm 0.163$
& $0.724 \pm 0.257$
& $0.794 \pm 0.203$
& $0.797 \pm 0.199$ \\

\textcolor{darkblue}{\textbf{\Method}} (OURS)
& \best{0.834 \pm 0.176}
& \best{0.837 \pm 0.148}
& \second{0.812 \pm 0.156}
& \best{0.803 \pm 0.234}
& \second{0.820 \pm 0.191}
& $0.793 \pm 0.196$ \\

\bottomrule
\end{tabular*}

\caption{Performance on large-scale datasets.}
\label{tab:large_scale_minority_recall}
\end{table*}

\paragraph{Datasets.}
We evaluate \Method on a broad collection of real-world
crowdsourcing datasets covering natural language processing, computer
vision, and general classification tasks. The natural language processing benchmarks include Recognizing Textual
Entailment (RTE), TREC and NIST--TREC relevance judgments, product
classification, web relevance assessment, HITSpam, and several sentiment
analysis datasets, including PosSent, Weather Sentiment--AMT, and
Sentiment Popularity--AMT \citep{snow2008cheap}. We also include the
Movie Reviews 4-class dataset and three variants of the Medical
CrowdTruth benchmark: Medical CrowdTruth All, Cause, and Treat. We additionally consider common benchmarks such as CF, CF$^\star$, MS, SP, and the ZC variants (ZC-all, ZC-in, and
ZC-us) \citep{crowdfm}. For computer vision, we include Bird, Dog, Face,
LabelMe, and CIFAR-10H, which have been widely used to study annotator
expertise, item difficulty, and human uncertainty in visual
crowdsourcing
\citep{whitehill2009glad,welinder2010multidimensional,rodrigues2013,
russell2008labelme,peterson2019humanuncertaintymakesclassification}. We also consider the challenging Duchenne smile dataset, referred to as the WSCM dataset, which was used in \citet{whitehill2009glad}.

Overall, the benchmark contains both binary and multiclass tasks and
covers a wide range of class-imbalance levels, crowd sizes, dataset
sizes, and annotation densities. This diversity allows us to evaluate
the aggregation methods in both sparse and highly redundant annotation
regimes, as well as on large-scale datasets in terms of either the
number of items or the number of annotations.

\paragraph{Methodology.}
We compare \Method with five aggregation baselines: PLAT, specifically designed
for imbalanced label aggregation, and four widely used methods, namely majority
voting (MV), Dawid--Skene (DS), GLAD, and CrowdFM, a recent deep
learning-based approach. CCGPFL \citep{gilgonzalez2026focal} is not included
because it requires item features to model instance-dependent annotator
reliability. Since such features are unavailable
for many datasets in our benchmark, CCGPFL cannot be evaluated under our common
annotation-only setting.
 We use the official Crowd-Kit implementations of MV, DS,
and GLAD \citep{Ustalov_2024}, and the authors' official implementation of
CrowdFM.

For each method, we evaluate the aggregated labels against the available
ground-truth labels. Values are reported as mean $\pm$ standard deviation across
datasets after averaging each dataset over $4$ random seeds. Let
\(n_k:=\sum_{i=1}^{n}\indic{y_i=k},\)
\(
\mathcal K_{\mathrm{obs}}:=\{k\in[K]:n_k>0\}\)
and let
\(
y_{\min}
\)
denote the least frequent ground-truth class. The empirical recall of class
\(k\in\mathcal K_{\mathrm{obs}}\) is
\[
\widehat R_k
:=
\frac{
\sum_{i=1}^{n}
\indic{y_i=k,\ \widehat y_i=k}
}{
n_k
}.
\]
Minority recall and balanced accuracy are then defined by
\[
\widehat{\mathrm{Recall}}_{\min}
:=
\widehat R_{y_{\min}},
\qquad
\widehat{\mathrm{BalAcc}}
:=
\frac{1}{|\mathcal K_{\mathrm{obs}}|}
\sum_{k\in\mathcal K_{\mathrm{obs}}}
\widehat R_k.
\]
The multiclass
benchmark contains $11$ datasets, including $3$ with imbalanced ratio greater than $3$, $4$ with
$\mathrm{Annotators/item}\geq 10$, and 3 with $\mathrm{Annotators/item}<5$. The binary
benchmark contains 16 datasets, including $4$, $11$, and $2$ datasets in these
respective regimes.
We also consider
large-scale subsets: $8$ datasets with at least $20\,000$ annotations and $4$
datasets with at least $4\,000$ annotated items.

Since high global accuracy may hide poor recovery of rare classes, we treat
minority recall as the primary metric.

We report results separately for
binary datasets and for the full benchmark. In
Table~\ref{tab:large_scale_minority_recall}, we report minority recall
in the large-scale annotation and large-scale item regimes.

\paragraph{Implementation details.}
We implement \Method in Python and optimize its penalized M-step objective
using the L-BFGS-B algorithm through
\texttt{scipy.optimize.minimize} \citep{2020SciPy-NMeth}. At each EM iteration, the optimizer is initialized with the parameter estimates obtained at the previous iteration. The prior $(\pi_k)_k$ is initialized uniformly at the initialization of the EM.

\paragraph{Results.}
Across real-world datasets, \Method shows the strongest and most consistent
ability to recover the minority class. On multiclass datasets, (Table~\ref{tab:regimes_multiclass_recall_balacc}), it improves the
average minority recall from \(0.691\) to \(0.722\) and the balanced accuracy
from \(0.765\) to \(0.770\) over the strongest baseline. The gain is larger in
the most imbalanced regime, reaching \(+7.4\) recall points. 
On binary datasets (Table~\ref{tab:regimes_binary_recall_balacc_with_plat}),
\Method achieves the best overall balanced accuracy (\(0.847\)) and the best
minority recall (\(0.859\), tied with PLAT). Notice that \Method has the lowest variance across datasets compared with PLAT.  
It is also more stable across datasets, with the lowest recall variability
overall and in the imbalanced and sparse-annotation regimes.
Under strong imbalance, \Method reaches \(0.888\) recall, while for
Ann./item~\(\geq 10\), it obtains the best recall (\(0.851\)) and balanced
accuracy (\(0.853\)).

\subsection{\Method versus Gold-Supervised Methods}
\label{subsec:gold_labels}

We investigate whether access to ground-truth labels is necessary for accurate
minority-class recovery. We compare \Method, which is fitted using only the
observed crowd labels and never accesses gold labels, with supervised
aggregation methods whose annotator parameters are estimated from an increasing
proportion of gold-labeled items.

This kind of question has previously been investigated, for instance, by
\citet{Jung_Lease_2015}. We revisit it from the perspective of class imbalance,
with a particular focus on minority recall. Notice that standard majority voting
does not use gold labels and is therefore unaffected by their availability. In
contrast, gold-supervised methods such as Gold Majority Vote and Dawid--Skene
estimate annotator reliability from the available gold-labeled items, so their
performance may improve as the amount of supervision increases.
\paragraph{Methodology}
Let \(\mathcal I=\{1,\ldots,n\}\) be the set of items and let
a real number \(x\in ]0,1[\). We sample a subset
\(\mathcal I_x\subset\mathcal I\), containing \(\lfloor xn\rfloor\) items, for
which the ground-truth labels are assumed to be available, these are the \emph{gold labels}.
We define the gold training set as
\[
    \mathcal D_x^{\mathrm{gold}}
    :=
    \left\{
    \left(i, \left(Y_{ir}\right)_{r\in\Omega_i},Y_i\right):
    i\in\mathcal I_x
    \right\},
\]
and the remaining test set as the complementary of \(\mathcal D_x^{\mathrm{gold}}\). 

Given \(\mathcal D_x^{\mathrm{gold}}\), we fit Gold Majority Vote
\citep{Ustalov_2024} and a gold-supervised Dawid--Skene model.
For Dawid--Skene, the annotator confusion matrices are estimated from the gold
items and then kept fixed when aggregating the annotations of the test items.
In parallel, \Method is fitted classically using only
\(
    \left\{
    \left(i,\left(Y_{ir}\right)_{r\in\Omega_i}\right):
    i\in\mathcal I\setminus\mathcal I_x
    \right\}.
\) All three methods are evaluated on the same (not observed) items.
\begin{figure}[!t]
    \centering
    \includegraphics[width=1\linewidth]{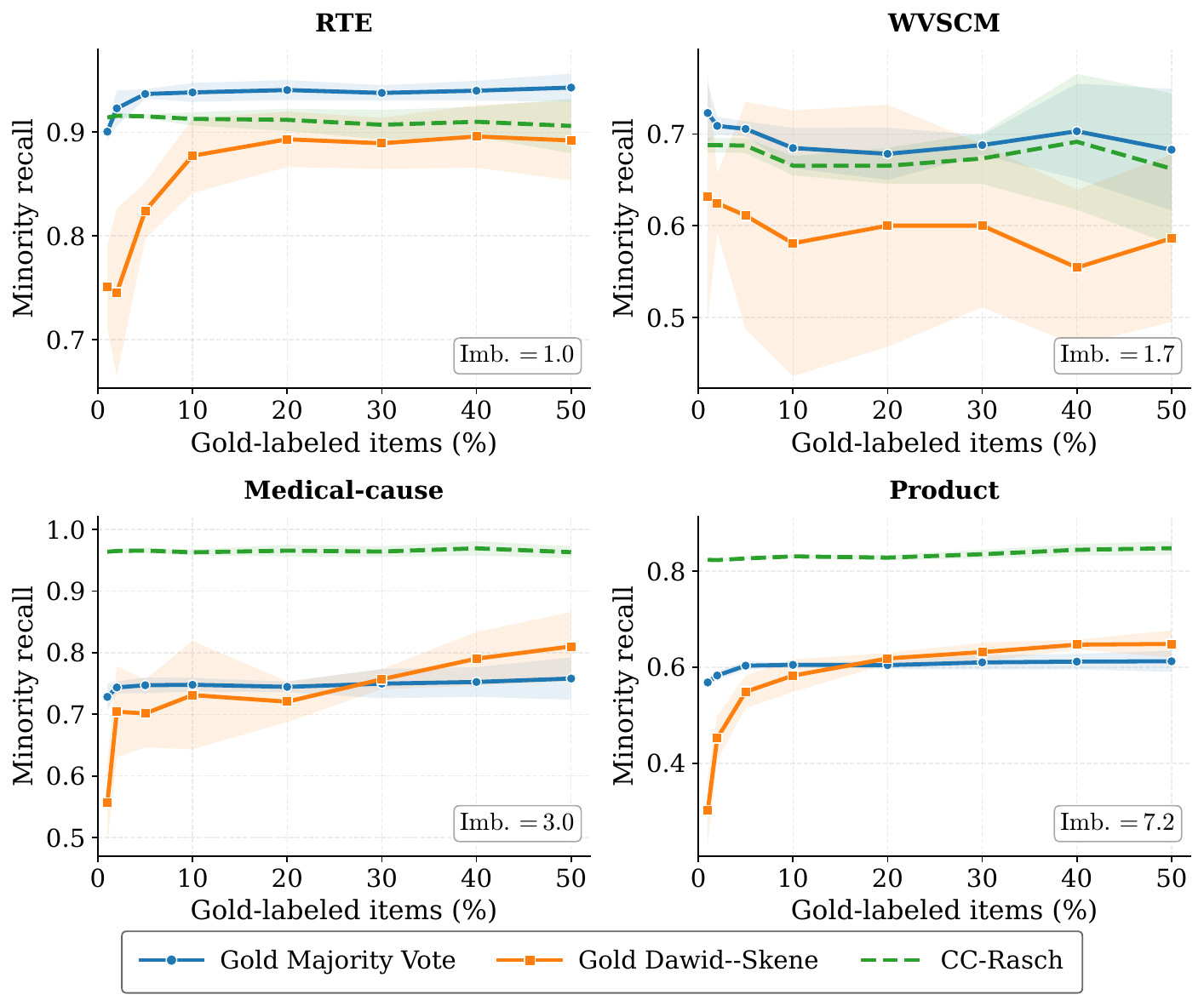}
    \caption{\label{goldlabelfig} Gold label boosted methods versus \Method.}
\end{figure}
\paragraph{Results.}
According to Figure~\ref{goldlabelfig}, 
in the class-balanced case, Gold Majority Vote achieves a minority recall higher
than or close to that of \Method. However, as class imbalance increases, the gap
between \Method and the two gold-supervised methods becomes larger. Gold
Dawid--Skene and Gold Majority Vote obtain similar minority recall on the most
imbalanced datasets, although some differences remain on less imbalanced ones.
This confirms in particular that an accurate estimation of annotator abilities
is not always sufficient under strong class imbalance. Accounting for class-dependent item difficulty allows \Method to better aggregate labels.

\section{Conclusion}\label{sec:conclusions}

In this paper, we studied label aggregation in the class-imbalanced crowdsourcing setting, where rare classes are often the most important ones. We provided a new type of Condorcet Jury Theorem adapted to the imbalanced setting. To address the limitations created by the imbalanced setting, we introduced \Method, a class-dependent aggregation model that combines annotator competence and item difficulty. Our experiments on synthetic and real-world datasets show that \Method consistently improves minority-class recall while remaining competitive in balanced accuracy and F1-score, with particularly strong gains under severe class imbalance, high annotation redundancy, and large-scale settings. 

\section{Detailed results on each dataset}
\subsection{Synthetic dataset generation}
Here we provide a more detailed description of the generation process for synthetic dataset. 

For each item \(i\), we sample a ground-truth label
\(
Y_i \sim \operatorname{Bernoulli}(\pi),
\)
where class \(1\) is the minority class, and an item-difficulty indicator
\[
H_i \sim \operatorname{Bernoulli}(\rho).
\]
Each annotator \(r\) belongs to a type
\(T_r\in\{\mathrm{good},\mathrm{maj},\mathrm{bad},\mathrm{min}\}\),
with fixed type proportions. For every observed annotation \((i,r)\),
\[
\mathbb P(Y_{ir}=Y_i\mid Y_i=k,H_i=h,T_r=t)
=
q_{t,k}-h\delta,
\]
where \(q_{t,k}\) is the class-conditional reliability of annotator type \(t\)
and \(\delta\) is the hard-item penalty. Otherwise, the annotator outputs
\(1-Y_i\). Each item receives
\[
N_i=1+\operatorname{Poisson}(\lambda-1)
\]
annotations, with annotators sampled uniformly without replacement.
In the controlled sweep, the proportions of reliable and unreliable
annotators are fixed, while the mass of majority and minority specialists is
redistributed between the two groups.
\subsection{Detailed benchmark}
\begin{longtable}{llrrrr}
\caption{\label{tab:all_dataset_results_extended_with_plat}}\\
\toprule
\textbf{Dataset} & \textbf{Method} & \textbf{Imb.} & \textbf{Ann./item} & \textbf{Min. Recall $\uparrow$} & \textbf{Macro F1 $\uparrow$} \\
\midrule
\endfirsthead
\toprule
\textbf{Dataset} & \textbf{Method} & \textbf{Imb.} & \textbf{Ann./item} & \textbf{Min. Recall $\uparrow$} & \textbf{Macro F1 $\uparrow$} \\
\midrule
\endhead
\midrule
\multicolumn{6}{r}{Continued on next page}\\
\endfoot
\bottomrule
\endlastfoot
Loneliness (Older Adults) & Majority Vote & 19.00 & 4.77 & $0.800 \pm 0.000$ & $0.667 \pm 0.000$ \\
 & Dawid--Skene & 19.00 & 4.77 & $0.600 \pm 0.000$ & $0.634 \pm 0.000$ \\
 & GLAD & 19.00 & 4.77 & $0.600 \pm 0.000$ & \textcolor{darkblue}{\textbf{$0.693 \pm 0.000$}} \\
 & CrowdFM & 19.00 & 4.77 & $0.600 \pm 0.000$ & \textcolor{darkblue}{\textbf{$0.693 \pm 0.000$}} \\
 & $\Phi$-Rasch (OUR) & 19.00 & 4.77 & $0.800 \pm 0.000$ & $0.494 \pm 0.000$ \\
 & PLAT & 19.00 & 4.77 & \textcolor{darkblue}{\textbf{$1.000 \pm 0.000$}} & $0.545 \pm 0.000$ \\
\addlinespace[2pt]
CF & Majority Vote & 10.22 & 5.73 & $0.333 \pm 0.000$ & \textcolor{darkblue}{\textbf{$0.810 \pm 0.000$}} \\
 & Dawid--Skene & 10.22 & 5.73 & $0.444 \pm 0.000$ & $0.734 \pm 0.000$ \\
 & GLAD & 10.22 & 5.73 & $0.333 \pm 0.000$ & $0.775 \pm 0.000$ \\
 & CrowdFM & 10.22 & 5.73 & $0.361 \pm 0.106$ & $0.777 \pm 0.016$ \\
 & $\Phi$-Rasch (OUR) & 10.22 & 5.73 & \textcolor{darkblue}{\textbf{$0.667 \pm 0.000$}} & $0.754 \pm 0.000$ \\
\addlinespace[2pt]
CF* & Majority Vote & 10.22 & 20.10 & $0.111 \pm 0.000$ & $0.723 \pm 0.000$ \\
 & Dawid--Skene & 10.22 & 20.10 & \textcolor{darkblue}{\textbf{$0.222 \pm 0.000$}} & $0.727 \pm 0.000$ \\
 & GLAD & 10.22 & 20.10 & $0.111 \pm 0.000$ & \textcolor{darkblue}{\textbf{$0.728 \pm 0.000$}} \\
 & CrowdFM & 10.22 & 20.10 & $0.028 \pm 0.056$ & $0.705 \pm 0.021$ \\
 & $\Phi$-Rasch (OUR) & 10.22 & 20.10 & \textcolor{darkblue}{\textbf{$0.222 \pm 0.000$}} & $0.728 \pm 0.000$ \\
\addlinespace[2pt]
Weather Sentiment - AMT & Majority Vote & 10.22 & 20.00 & $0.111 \pm 0.000$ & $0.723 \pm 0.000$ \\
 & Dawid--Skene & 10.22 & 20.00 & \textcolor{darkblue}{\textbf{$0.222 \pm 0.000$}} & $0.727 \pm 0.000$ \\
 & GLAD & 10.22 & 20.00 & $0.111 \pm 0.000$ & \textcolor{darkblue}{\textbf{$0.728 \pm 0.000$}} \\
 & CrowdFM & 10.22 & 20.00 & $0.028 \pm 0.056$ & $0.705 \pm 0.021$ \\
 & $\Phi$-Rasch (OUR) & 10.22 & 20.00 & \textcolor{darkblue}{\textbf{$0.222 \pm 0.000$}} & $0.725 \pm 0.000$ \\
\addlinespace[2pt]
Amazon Sentiment (Negative) & Majority Vote & 8.70 & 7.61 & $0.827 \pm 0.000$ & $0.906 \pm 0.000$ \\
 & Dawid--Skene & 8.70 & 7.61 & $0.837 \pm 0.000$ & $0.904 \pm 0.000$ \\
 & GLAD & 8.70 & 7.61 & $0.827 \pm 0.000$ & $0.901 \pm 0.000$ \\
 & CrowdFM & 8.70 & 7.61 & $0.837 \pm 0.000$ & \textcolor{darkblue}{\textbf{$0.909 \pm 0.000$}} \\
 & $\Phi$-Rasch (OUR) & 8.70 & 7.61 & \textcolor{darkblue}{\textbf{$0.942 \pm 0.000$}} & $0.762 \pm 0.000$ \\
 & PLAT & 8.70 & 7.61 & $0.904 \pm 0.000$ & $0.857 \pm 0.000$ \\
\addlinespace[2pt]
product & Majority Vote & 7.22 & 3.00 & $0.613 \pm 0.000$ & $0.766 \pm 0.000$ \\
 & Dawid--Skene & 7.22 & 3.00 & $0.640 \pm 0.000$ & \textcolor{darkblue}{\textbf{$0.843 \pm 0.000$}} \\
 & GLAD & 7.22 & 3.00 & $0.518 \pm 0.000$ & $0.799 \pm 0.000$ \\
 & CrowdFM & 7.22 & 3.00 & $0.581 \pm 0.007$ & $0.765 \pm 0.005$ \\
 & $\Phi$-Rasch (OUR) & 7.22 & 3.00 & $0.825 \pm 0.000$ & $0.707 \pm 0.000$ \\
 & PLAT & 7.22 & 3.00 & \textcolor{darkblue}{\textbf{$0.889 \pm 0.000$}} & $0.567 \pm 0.000$ \\
\addlinespace[2pt]
Jigsaw & Majority Vote & 6.13 & 19.93 & $0.916 \pm 0.000$ & $0.975 \pm 0.000$ \\
 & Dawid--Skene & 6.13 & 19.93 & $0.970 \pm 0.000$ & \textcolor{darkblue}{\textbf{$0.982 \pm 0.000$}} \\
 & GLAD & 6.13 & 19.93 & $0.976 \pm 0.000$ & $0.971 \pm 0.000$ \\
 & CrowdFM & 6.13 & 19.93 & $0.930 \pm 0.000$ & $0.972 \pm 0.000$ \\
 & $\Phi$-Rasch (OUR) & 6.13 & 19.93 & $0.959 \pm 0.000$ & $0.918 \pm 0.000$ \\
 & PLAT & 6.13 & 19.93 & \textcolor{darkblue}{\textbf{$1.000 \pm 0.000$}} & $0.743 \pm 0.000$ \\
\addlinespace[2pt]
ZCall & Majority Vote & 3.60 & 10.71 & $0.605 \pm 0.000$ & $0.758 \pm 0.000$ \\
 & Dawid--Skene & 3.60 & 10.71 & $0.862 \pm 0.000$ & $0.758 \pm 0.000$ \\
 & GLAD & 3.60 & 10.71 & $0.824 \pm 0.000$ & \textcolor{darkblue}{\textbf{$0.782 \pm 0.000$}} \\
 & CrowdFM & 3.60 & 10.71 & $0.729 \pm 0.027$ & $0.771 \pm 0.005$ \\
 & $\Phi$-Rasch (OUR) & 3.60 & 10.71 & $0.887 \pm 0.000$ & $0.739 \pm 0.000$ \\
 & PLAT & 3.60 & 10.71 & \textcolor{darkblue}{\textbf{$0.907 \pm 0.000$}} & $0.655 \pm 0.000$ \\
\addlinespace[2pt]
ZCin & Majority Vote & 3.60 & 5.49 & $0.555 \pm 0.000$ & $0.658 \pm 0.000$ \\
 & Dawid--Skene & 3.60 & 5.49 & $0.747 \pm 0.000$ & $0.701 \pm 0.000$ \\
 & GLAD & 3.60 & 5.49 & $0.707 \pm 0.000$ & \textcolor{darkblue}{\textbf{$0.709 \pm 0.000$}} \\
 & CrowdFM & 3.60 & 5.49 & $0.597 \pm 0.023$ & $0.661 \pm 0.001$ \\
 & $\Phi$-Rasch (OUR) & 3.60 & 5.49 & \textcolor{darkblue}{\textbf{$0.828 \pm 0.000$}} & $0.674 \pm 0.000$ \\
 & PLAT & 3.60 & 5.49 & \textcolor{darkblue}{\textbf{$0.828 \pm 0.000$}} & $0.614 \pm 0.000$ \\
\addlinespace[2pt]
ZCus & Majority Vote & 3.60 & 5.98 & $0.666 \pm 0.000$ & $0.801 \pm 0.000$ \\
 & Dawid--Skene & 3.60 & 5.98 & $0.853 \pm 0.000$ & $0.784 \pm 0.000$ \\
 & GLAD & 3.60 & 5.98 & $0.797 \pm 0.000$ & \textcolor{darkblue}{\textbf{$0.830 \pm 0.000$}} \\
 & CrowdFM & 3.60 & 5.98 & $0.686 \pm 0.027$ & $0.805 \pm 0.003$ \\
 & $\Phi$-Rasch (OUR) & 3.60 & 5.98 & \textcolor{darkblue}{\textbf{$0.905 \pm 0.000$}} & $0.725 \pm 0.000$ \\
 & PLAT & 3.60 & 5.98 & $0.894 \pm 0.000$ & $0.700 \pm 0.000$ \\
\addlinespace[2pt]
Loneliness (Intervention) & Majority Vote & 3.17 & 14.36 & $0.875 \pm 0.000$ & $0.450 \pm 0.000$ \\
 & Dawid--Skene & 3.17 & 14.36 & $0.917 \pm 0.000$ & \textcolor{darkblue}{\textbf{$0.508 \pm 0.000$}} \\
 & GLAD & 3.17 & 14.36 & \textcolor{darkblue}{\textbf{$0.958 \pm 0.000$}} & $0.440 \pm 0.000$ \\
 & CrowdFM & 3.17 & 14.36 & $0.917 \pm 0.000$ & $0.489 \pm 0.000$ \\
 & $\Phi$-Rasch (OUR) & 3.17 & 14.36 & \textcolor{darkblue}{\textbf{$0.958 \pm 0.000$}} & $0.429 \pm 0.000$ \\
 & PLAT & 3.17 & 14.36 & $0.625 \pm 0.000$ & $0.496 \pm 0.000$ \\
\addlinespace[2pt]
medical crowdtruth cause & Majority Vote & 2.97 & 13.42 & $0.720 \pm 0.000$ & $0.863 \pm 0.000$ \\
 & Dawid--Skene & 2.97 & 13.42 & $0.904 \pm 0.000$ & \textcolor{darkblue}{\textbf{$0.900 \pm 0.000$}} \\
 & GLAD & 2.97 & 13.42 & $0.745 \pm 0.000$ & $0.874 \pm 0.000$ \\
 & CrowdFM & 2.97 & 13.42 & $0.749 \pm 0.021$ & $0.876 \pm 0.006$ \\
 & $\Phi$-Rasch (OUR) & 2.97 & 13.42 & $0.971 \pm 0.000$ & $0.883 \pm 0.000$ \\
 & PLAT & 2.97 & 13.42 & \textcolor{darkblue}{\textbf{$0.996 \pm 0.000$}} & $0.746 \pm 0.000$ \\
\addlinespace[2pt]
movie reviews 4class & Majority Vote & 2.95 & 4.96 & $0.967 \pm 0.000$ & $0.645 \pm 0.000$ \\
 & Dawid--Skene & 2.95 & 4.96 & \textcolor{darkblue}{\textbf{$0.995 \pm 0.000$}} & $0.709 \pm 0.000$ \\
 & GLAD & 2.95 & 4.96 & $0.978 \pm 0.000$ & $0.687 \pm 0.000$ \\
 & CrowdFM & 2.95 & 4.96 & $0.969 \pm 0.005$ & $0.691 \pm 0.020$ \\
 & $\Phi$-Rasch (OUR) & 2.95 & 4.96 & $0.989 \pm 0.000$ & \textcolor{darkblue}{\textbf{$0.710 \pm 0.000$}} \\
\addlinespace[2pt]
hitspam crowdflower & Majority Vote & 2.23 & 22.97 & \textcolor{darkblue}{\textbf{$0.097 \pm 0.000$}} & \textcolor{darkblue}{\textbf{$0.469 \pm 0.000$}} \\
 & Dawid--Skene & 2.23 & 22.97 & \textcolor{darkblue}{\textbf{$0.097 \pm 0.000$}} & \textcolor{darkblue}{\textbf{$0.469 \pm 0.000$}} \\
 & GLAD & 2.23 & 22.97 & \textcolor{darkblue}{\textbf{$0.097 \pm 0.000$}} & \textcolor{darkblue}{\textbf{$0.469 \pm 0.000$}} \\
 & CrowdFM & 2.23 & 22.97 & \textcolor{darkblue}{\textbf{$0.097 \pm 0.000$}} & \textcolor{darkblue}{\textbf{$0.469 \pm 0.000$}} \\
 & $\Phi$-Rasch (OUR) & 2.23 & 22.97 & \textcolor{darkblue}{\textbf{$0.097 \pm 0.000$}} & \textcolor{darkblue}{\textbf{$0.469 \pm 0.000$}} \\
 & PLAT & 2.23 & 22.97 & \textcolor{darkblue}{\textbf{$0.097 \pm 0.000$}} & \textcolor{darkblue}{\textbf{$0.469 \pm 0.000$}} \\
\addlinespace[2pt]
Web & Majority Vote & 2.09 & 5.84 & $0.918 \pm 0.000$ & $0.773 \pm 0.000$ \\
 & Dawid--Skene & 2.09 & 5.84 & $0.888 \pm 0.000$ & $0.826 \pm 0.000$ \\
 & GLAD & 2.09 & 5.84 & $0.924 \pm 0.000$ & $0.826 \pm 0.000$ \\
 & CrowdFM & 2.09 & 5.84 & $0.928 \pm 0.077$ & $0.856 \pm 0.006$ \\
 & $\Phi$-Rasch (OUR) & 2.09 & 5.84 & \textcolor{darkblue}{\textbf{$0.970 \pm 0.000$}} & \textcolor{darkblue}{\textbf{$0.869 \pm 0.000$}} \\
\addlinespace[2pt]
medical crowdtruth all & Majority Vote & 1.92 & 13.49 & $0.780 \pm 0.000$ & $0.886 \pm 0.000$ \\
 & Dawid--Skene & 1.92 & 13.49 & $0.910 \pm 0.000$ & \textcolor{darkblue}{\textbf{$0.929 \pm 0.000$}} \\
 & GLAD & 1.92 & 13.49 & $0.823 \pm 0.000$ & $0.907 \pm 0.000$ \\
 & CrowdFM & 1.92 & 13.49 & $0.817 \pm 0.018$ & $0.902 \pm 0.006$ \\
 & $\Phi$-Rasch (OUR) & 1.92 & 13.49 & $0.959 \pm 0.000$ & \textcolor{darkblue}{\textbf{$0.929 \pm 0.000$}} \\
 & PLAT & 1.92 & 13.49 & \textcolor{darkblue}{\textbf{$0.994 \pm 0.000$}} & $0.844 \pm 0.000$ \\
\addlinespace[2pt]
wvscm & Majority Vote & 1.74 & 12.26 & \textcolor{darkblue}{\textbf{$0.741 \pm 0.000$}} & $0.714 \pm 0.000$ \\
 & Dawid--Skene & 1.74 & 12.26 & $0.672 \pm 0.000$ & $0.713 \pm 0.000$ \\
 & GLAD & 1.74 & 12.26 & $0.621 \pm 0.000$ & \textcolor{darkblue}{\textbf{$0.759 \pm 0.000$}} \\
 & CrowdFM & 1.74 & 12.26 & $0.698 \pm 0.017$ & $0.711 \pm 0.014$ \\
 & $\Phi$-Rasch (OUR) & 1.74 & 12.26 & $0.690 \pm 0.000$ & $0.721 \pm 0.000$ \\
\addlinespace[2pt]
nist-trec-relevance & Majority Vote & 1.74 & 5.87 & $0.490 \pm 0.000$ & $0.528 \pm 0.000$ \\
 & Dawid--Skene & 1.74 & 5.87 & $0.333 \pm 0.000$ & \textcolor{darkblue}{\textbf{$0.568 \pm 0.000$}} \\
 & GLAD & 1.74 & 5.87 & \textcolor{darkblue}{\textbf{$0.528 \pm 0.000$}} & $0.566 \pm 0.000$ \\
 & CrowdFM & 1.74 & 5.87 & $0.441 \pm 0.044$ & $0.534 \pm 0.007$ \\
 & $\Phi$-Rasch (OUR) & 1.74 & 5.87 & $0.468 \pm 0.000$ & $0.559 \pm 0.000$ \\
\addlinespace[2pt]
LabelMe & Majority Vote & 1.73 & 2.55 & \textcolor{darkblue}{\textbf{$0.843 \pm 0.000$}} & $0.765 \pm 0.000$ \\
 & Dawid--Skene & 1.73 & 2.55 & $0.809 \pm 0.000$ & \textcolor{darkblue}{\textbf{$0.786 \pm 0.000$}} \\
 & GLAD & 1.73 & 2.55 & $0.764 \pm 0.000$ & $0.770 \pm 0.000$ \\
 & CrowdFM & 1.73 & 2.55 & $0.770 \pm 0.030$ & $0.764 \pm 0.006$ \\
 & $\Phi$-Rasch (OUR) & 1.73 & 2.55 & $0.719 \pm 0.000$ & $0.768 \pm 0.000$ \\
\addlinespace[2pt]
Amazon Sentiment (Book) & Majority Vote & 1.57 & 7.63 & $0.980 \pm 0.000$ & $0.960 \pm 0.000$ \\
 & Dawid--Skene & 1.57 & 7.63 & $0.977 \pm 0.000$ & $0.961 \pm 0.000$ \\
 & GLAD & 1.57 & 7.63 & $0.980 \pm 0.000$ & $0.963 \pm 0.000$ \\
 & CrowdFM & 1.57 & 7.63 & $0.982 \pm 0.000$ & $0.962 \pm 0.000$ \\
 & $\Phi$-Rasch (OUR) & 1.57 & 7.63 & $0.980 \pm 0.000$ & \textcolor{darkblue}{\textbf{$0.964 \pm 0.000$}} \\
 & PLAT & 1.57 & 7.63 & \textcolor{darkblue}{\textbf{$0.990 \pm 0.000$}} & $0.942 \pm 0.000$ \\
\addlinespace[2pt]
Loneliness (Technology) & Majority Vote & 1.56 & 4.86 & $0.897 \pm 0.000$ & $0.886 \pm 0.000$ \\
 & Dawid--Skene & 1.56 & 4.86 & \textcolor{darkblue}{\textbf{$0.949 \pm 0.000$}} & $0.887 \pm 0.000$ \\
 & GLAD & 1.56 & 4.86 & $0.897 \pm 0.000$ & $0.876 \pm 0.000$ \\
 & CrowdFM & 1.56 & 4.86 & $0.897 \pm 0.000$ & $0.886 \pm 0.000$ \\
 & $\Phi$-Rasch (OUR) & 1.56 & 4.86 & \textcolor{darkblue}{\textbf{$0.949 \pm 0.000$}} & \textcolor{darkblue}{\textbf{$0.907 \pm 0.000$}} \\
 & PLAT & 1.56 & 4.86 & \textcolor{darkblue}{\textbf{$0.949 \pm 0.000$}} & $0.897 \pm 0.000$ \\
\addlinespace[2pt]
Dog & Majority Vote & 1.35 & 10.00 & $0.860 \pm 0.000$ & $0.816 \pm 0.000$ \\
 & Dawid--Skene & 1.35 & 10.00 & \textcolor{darkblue}{\textbf{$0.884 \pm 0.000$}} & \textcolor{darkblue}{\textbf{$0.845 \pm 0.000$}} \\
 & GLAD & 1.35 & 10.00 & $0.860 \pm 0.000$ & $0.834 \pm 0.000$ \\
 & CrowdFM & 1.35 & 10.00 & $0.860 \pm 0.011$ & $0.820 \pm 0.003$ \\
 & $\Phi$-Rasch (OUR) & 1.35 & 10.00 & $0.866 \pm 0.000$ & $0.837 \pm 0.000$ \\
\addlinespace[2pt]
bird & Majority Vote & 1.28 & 39.00 & $0.553 \pm 0.000$ & $0.738 \pm 0.000$ \\
 & Dawid--Skene & 1.28 & 39.00 & $0.851 \pm 0.000$ & $0.876 \pm 0.000$ \\
 & GLAD & 1.28 & 39.00 & $0.383 \pm 0.000$ & $0.671 \pm 0.000$ \\
 & CrowdFM & 1.28 & 39.00 & $0.601 \pm 0.032$ & $0.776 \pm 0.017$ \\
 & $\Phi$-Rasch (OUR) & 1.28 & 39.00 & $0.872 \pm 0.000$ & \textcolor{darkblue}{\textbf{$0.914 \pm 0.000$}} \\
 & PLAT & 1.25 & 39.00 & \textcolor{darkblue}{\textbf{$0.875 \pm 0.000$}} & $0.851 \pm 0.000$ \\
\addlinespace[2pt]
Trec & Majority Vote & 1.27 & 4.64 & $0.432 \pm 0.000$ & $0.632 \pm 0.000$ \\
 & Dawid--Skene & 1.27 & 4.64 & $0.534 \pm 0.000$ & $0.685 \pm 0.000$ \\
 & GLAD & 1.27 & 4.64 & $0.080 \pm 0.000$ & $0.432 \pm 0.000$ \\
 & CrowdFM & 1.27 & 4.64 & $0.406 \pm 0.033$ & $0.618 \pm 0.012$ \\
 & $\Phi$-Rasch (OUR) & 1.27 & 4.64 & \textcolor{darkblue}{\textbf{$0.688 \pm 0.000$}} & \textcolor{darkblue}{\textbf{$0.700 \pm 0.000$}} \\
 & PLAT & 1.27 & 4.64 & $0.653 \pm 0.000$ & $0.654 \pm 0.000$ \\
\addlinespace[2pt]
Bird & Majority Vote & 1.25 & 39.00 & $0.562 \pm 0.000$ & $0.742 \pm 0.000$ \\
 & Dawid--Skene & 1.25 & 39.00 & $0.854 \pm 0.000$ & $0.887 \pm 0.000$ \\
 & GLAD & 1.25 & 39.00 & $0.396 \pm 0.000$ & $0.678 \pm 0.000$ \\
 & CrowdFM & 1.25 & 39.00 & $0.609 \pm 0.031$ & $0.780 \pm 0.016$ \\
 & $\Phi$-Rasch (OUR) & 1.25 & 39.00 & \textcolor{darkblue}{\textbf{$0.875 \pm 0.000$}} & \textcolor{darkblue}{\textbf{$0.915 \pm 0.000$}} \\
 & PLAT & 1.25 & 39.00 & \textcolor{darkblue}{\textbf{$0.875 \pm 0.000$}} & $0.851 \pm 0.000$ \\
\addlinespace[2pt]
MS & Majority Vote & 1.19 & 4.21 & $0.683 \pm 0.000$ & $0.709 \pm 0.000$ \\
 & Dawid--Skene & 1.19 & 4.21 & \textcolor{darkblue}{\textbf{$0.905 \pm 0.000$}} & $0.770 \pm 0.000$ \\
 & GLAD & 1.19 & 4.21 & $0.889 \pm 0.000$ & $0.793 \pm 0.000$ \\
 & CrowdFM & 1.19 & 4.21 & \textcolor{darkblue}{\textbf{$0.905 \pm 0.000$}} & \textcolor{darkblue}{\textbf{$0.802 \pm 0.004$}} \\
 & $\Phi$-Rasch (OUR) & 1.19 & 4.21 & \textcolor{darkblue}{\textbf{$0.905 \pm 0.000$}} & $0.801 \pm 0.000$ \\
\addlinespace[2pt]
PosSent & Majority Vote & 1.12 & 20.00 & $0.892 \pm 0.000$ & $0.931 \pm 0.000$ \\
 & Dawid--Skene & 1.12 & 20.00 & $0.934 \pm 0.000$ & \textcolor{darkblue}{\textbf{$0.960 \pm 0.000$}} \\
 & GLAD & 1.12 & 20.00 & $0.903 \pm 0.000$ & $0.948 \pm 0.000$ \\
 & CrowdFM & 1.12 & 20.00 & $0.926 \pm 0.005$ & $0.949 \pm 0.003$ \\
 & $\Phi$-Rasch (OUR) & 1.12 & 20.00 & $0.928 \pm 0.000$ & $0.957 \pm 0.000$ \\
 & PLAT & 1.12 & 20.00 & \textcolor{darkblue}{\textbf{$0.956 \pm 0.000$}} & $0.910 \pm 0.000$ \\
\addlinespace[2pt]
medical crowdtruth treat & Majority Vote & 1.07 & 13.61 & $0.829 \pm 0.000$ & $0.899 \pm 0.000$ \\
 & Dawid--Skene & 1.07 & 13.61 & $0.925 \pm 0.000$ & $0.944 \pm 0.000$ \\
 & GLAD & 1.07 & 13.61 & $0.898 \pm 0.000$ & $0.935 \pm 0.000$ \\
 & CrowdFM & 1.07 & 13.61 & $0.898 \pm 0.015$ & $0.935 \pm 0.008$ \\
 & $\Phi$-Rasch (OUR) & 1.07 & 13.61 & $0.962 \pm 0.000$ & \textcolor{darkblue}{\textbf{$0.962 \pm 0.000$}} \\
 & PLAT & 1.07 & 13.61 & \textcolor{darkblue}{\textbf{$0.973 \pm 0.000$}} & $0.957 \pm 0.000$ \\
\addlinespace[2pt]
Sentiment popularity - AMT & Majority Vote & 1.04 & 20.00 & $0.951 \pm 0.000$ & $0.944 \pm 0.000$ \\
 & Dawid--Skene & 1.04 & 20.00 & $0.943 \pm 0.000$ & $0.944 \pm 0.000$ \\
 & GLAD & 1.04 & 20.00 & $0.947 \pm 0.000$ & \textcolor{darkblue}{\textbf{$0.946 \pm 0.000$}} \\
 & CrowdFM & 1.04 & 20.00 & $0.944 \pm 0.002$ & $0.943 \pm 0.002$ \\
 & $\Phi$-Rasch (OUR) & 1.04 & 20.00 & \textcolor{darkblue}{\textbf{$0.955 \pm 0.000$}} & \textcolor{darkblue}{\textbf{$0.946 \pm 0.000$}} \\
 & PLAT & 1.04 & 20.00 & $0.951 \pm 0.000$ & $0.944 \pm 0.000$ \\
\addlinespace[2pt]
SP & Majority Vote & 1.00 & 5.55 & $0.885 \pm 0.000$ & $0.890 \pm 0.000$ \\
 & Dawid--Skene & 1.00 & 5.55 & $0.920 \pm 0.000$ & $0.915 \pm 0.000$ \\
 & GLAD & 1.00 & 5.55 & $0.913 \pm 0.000$ & \textcolor{darkblue}{\textbf{$0.917 \pm 0.000$}} \\
 & CrowdFM & 1.00 & 5.55 & $0.874 \pm 0.012$ & $0.898 \pm 0.004$ \\
 & $\Phi$-Rasch (OUR) & 1.00 & 5.55 & \textcolor{darkblue}{\textbf{$0.926 \pm 0.000$}} & $0.916 \pm 0.000$ \\
 & PLAT & 1.00 & 5.55 & $0.885 \pm 0.000$ & $0.890 \pm 0.000$ \\
\addlinespace[2pt]
Face & Majority Vote & 1.00 & 8.98 & \textcolor{darkblue}{\textbf{$0.938 \pm 0.000$}} & $0.610 \pm 0.000$ \\
 & Dawid--Skene & 1.00 & 8.98 & $0.904 \pm 0.000$ & $0.626 \pm 0.000$ \\
 & GLAD & 1.00 & 8.98 & $0.925 \pm 0.000$ & $0.611 \pm 0.000$ \\
 & CrowdFM & 1.00 & 8.98 & $0.933 \pm 0.007$ & $0.622 \pm 0.007$ \\
 & $\Phi$-Rasch (OUR) & 1.00 & 8.98 & $0.925 \pm 0.000$ & \textcolor{darkblue}{\textbf{$0.634 \pm 0.000$}} \\
\addlinespace[2pt]
RTE & Majority Vote & 1.00 & 10.00 & $0.910 \pm 0.000$ & $0.919 \pm 0.000$ \\
 & Dawid--Skene & 1.00 & 10.00 & \textcolor{darkblue}{\textbf{$0.948 \pm 0.000$}} & \textcolor{darkblue}{\textbf{$0.927 \pm 0.000$}} \\
 & GLAD & 1.00 & 10.00 & \textcolor{darkblue}{\textbf{$0.948 \pm 0.000$}} & $0.926 \pm 0.000$ \\
 & CrowdFM & 1.00 & 10.00 & $0.880 \pm 0.015$ & $0.914 \pm 0.006$ \\
 & $\Phi$-Rasch (OUR) & 1.00 & 10.00 & $0.943 \pm 0.000$ & $0.926 \pm 0.000$ \\
 & PLAT & 1.00 & 10.00 & $0.910 \pm 0.000$ & $0.919 \pm 0.000$ \\
\addlinespace[2pt]
cifar10h & Majority Vote & 1.00 & 51.42 & \textcolor{darkblue}{\textbf{$0.999 \pm 0.000$}} & $0.992 \pm 0.000$ \\
 & Dawid--Skene & 1.00 & 51.42 & \textcolor{darkblue}{\textbf{$0.999 \pm 0.000$}} & \textcolor{darkblue}{\textbf{$0.993 \pm 0.000$}} \\
 & GLAD & 1.00 & 51.42 & \textcolor{darkblue}{\textbf{$0.999 \pm 0.000$}} & $0.992 \pm 0.000$ \\
 & CrowdFM & 1.00 & 51.42 & $0.998 \pm 0.001$ & $0.992 \pm 0.001$ \\
 & $\Phi$-Rasch (OUR) & 1.00 & 51.42 & $0.994 \pm 0.001$ & $0.989 \pm 0.000$ \\
\addlinespace[2pt]
\end{longtable}


\bibliographystyle{plainnat}
\bibliography{aaai2027}


\end{document}